\pdfoutput=1
\documentclass{article} 
\usepackage{times}
\usepackage{geometry}

\usepackage[utf8]{inputenc} 
\usepackage[T1]{fontenc}    
\usepackage[colorlinks,citecolor=blue]{hyperref}       
\usepackage{url}            
\usepackage{booktabs}       
\usepackage{amsfonts}       
\usepackage{nicefrac}       
\usepackage{microtype}      

\usepackage{url}
\usepackage{authblk}

\usepackage{graphicx}
\usepackage{multirow}
\usepackage{floatrow}
\newfloatcommand{capbtabbox}{table}[][\FBwidth]

\PassOptionsToPackage{numbers}{natbib}
\usepackage{natbib}

\usepackage{enumerate}
\usepackage{amsthm,amsmath,amssymb}
\usepackage{amsfonts,dsfont}
\usepackage{nicefrac}
\usepackage{microtype}
\usepackage{mathtools}

\usepackage{caption}
\usepackage{subcaption}
\usepackage{enumitem}
\usepackage{algorithm}
\usepackage{algorithmic}
\usepackage[normalem]{ulem}
\usepackage{amssymb}
\usepackage{multicol}
\usepackage{adjustbox}
\usepackage{multirow}
\usepackage{color}
\usepackage{xspace}
\usepackage{mkolar_definitions}

\usepackage{comment}
\newcount\Comments  
\Comments=1 
\definecolor{darkgreen}{rgb}{0,0.5,0}
\definecolor{darkred}{rgb}{0.7,0,0}
\definecolor{teal}{rgb}{0.3,0.8,0.8}
\newcommand{\kibitz}[2]{\ifnum\Comments=1\colorbox{magenta!20}{\parbox{\linewidth}{\textcolor{#1}{\textsf{\footnotesize #2}}}}\fi}


\newenvironment{packed_item}{
	\begin{itemize}[leftmargin=*]
		\setlength{\itemsep}{1pt}
		\setlength{\parskip}{0pt}
		\setlength{\parsep}{0pt}
	}{\end{itemize}}

\newcounter{qcounter}
	{\end{list}}

\newtheorem{assumption}{Assumption}
\newcommand{\alg}{\textsc{Perch}\xspace}
\newcommand{\algc}{\textsc{Perch-C}\xspace}
\newcommand{\algb}{\textsc{Perch-B}\xspace}
\newcommand{\algbc}{\textsc{Perch-BC}\xspace}
\newcommand{\cstar}{\ensuremath{\mathcal{C}^\star}}
\newcommand{\chat}{\ensuremath{\widehat{\mathcal{C}}}}
\newcommand{\pstar}{\ensuremath{\mathcal{P}^\star}}
\newcommand{\phat}{\ensuremath{\widehat{\mathcal{P}}}}
\newcommand{\lvs}[1]{\ensuremath{\texttt{lvs}(#1)}}

\newcommand{\bal}[1]{\ensuremath{\texttt{bal}(#1)}}
\newcommand{\km}{\emph{K}-means}

\begin{document}

	\title{An Online Hierarchical Algorithm for Extreme Clustering}

	\author{
		Ari Kobren
		\thanks{The first two authors contributed equally.}}
	\author{
		Nicholas Monath$^*$}
	\author{
		Akshay Krishnamurthy}

	\author{
		Andrew McCallum	}

	\affil{College of Information and Computer Sciences \\ University of Massachusetts Amherst}
	\affil{\{akobren,nmonath,akshay,mccallum\}@cs.umass.edu}
	\maketitle

	\begin{abstract}
Many modern clustering methods scale well to a large number of data
items, $N$, but not to a large number of clusters, $K$.  This paper
introduces \alg, a new non-greedy algorithm for online hierarchical
clustering that scales to both massive $N$ and $K$---a problem setting
we term {\it extreme clustering}.  Our algorithm efficiently routes
new data points to the leaves of an incrementally-built tree.
Motivated by the desire for both accuracy and speed, our approach
performs tree rotations for the sake of enhancing subtree purity
and encouraging balancedness.  We prove that, under a natural
separability assumption, our non-greedy algorithm will produce trees
with perfect dendrogram purity regardless of online data arrival
order.  Our experiments demonstrate that \alg constructs more accurate
trees than other tree-building clustering algorithms and scales well
with both $N$ and $K$, achieving a higher quality clustering than the
strongest flat clustering competitor in nearly half the time.

\end{abstract}

	\maketitle

	\section{Introduction}
Clustering algorithms are a crucial component of any data scientist's
toolbox with applications ranging from identifying themes in large
text corpora~\cite{brown1992class}, to finding functionally similar
genes~\cite{eisen1998cluster}, to visualization, pre-processing, and
dimensionality reduction~\cite{gorban2008principal}.  As such, a
number of clustering algorithms have been developed and studied by the
statistics, machine learning, and theoretical computer science
communities. These algorithms and analyses target a variety of
scenarios, including large-scale, online, or streaming
settings~\cite{zhang1996birch,ackermann2012streamkm++}, clustering
with distribution shift~\cite{aggarwal2003framework}, and many more.

Modern clustering applications require algorithms that scale
gracefully with dataset size and complexity. In clustering, data set
size is measured by the number of points $N$ and their dimensionality
$d$, while the number of clusters, $K$, serves as a measure of
complexity. While several existing algorithms can cope with large
datasets, very few adequately handle datasets with many clusters.
We call problem instances with large $N$ and large
$K$ \emph{extreme clustering problems}--a phrase inspired by work in
extreme classification~\cite{choromanska2015logarithmic}.

Extreme clustering problems are increasingly prevalent. For example,
in entity resolution, record linkage and deduplication, the number of
clusters (i.e., entities) increases with dataset
size~\cite{betancourt2016flexible} and can be in the millions.
Similarly, the number of communities in
typical real world networks tends to follow a power-law
distribution~\cite{clauset2004finding} that also increases with
network size. Although used primarily in classification tasks, the
ImageNet dataset also describes precisely this large $N$, large $K$
regime ($N\approx 14$M,$K\approx 21$K)~\cite{deng2009imagenet,
  ILSVRC15}.

This paper presents a new online clustering algorithm, called \alg, that scales
mildly with both $N$ and $K$ and thus addresses the extreme clustering
setting.  Our algorithm constructs a tree structure over the data
points in an incremental fashion by routing incoming points to the
leaves, growing the tree, and maintaining its quality via simple
\emph{rotation} operations. The tree structure enables efficient
(often logarithmic time) search that scales to large datasets,
while simultaneously providing a rich data structure from which
multiple clusterings at various resolutions can be extracted.  The
rotations provide an efficient mechanism for our algorithm
to recover from mistakes that arise with greedy incremental
clustering procedures.

Operating under a simple separability assumption about the data, we
prove that our algorithm constructs a tree with perfect
\emph{dendrogram purity} regardless of the number of data points and
without knowledge of the number of clusters. This analysis relies
crucially on a recursive rotation procedure employed by our algorithm.
For scalability, we introduce another flavor of rotations that encourage
balancedness, and an approximation that enables faster point
insertions.  We also develop a leaf \emph{collapsing} mode of our algorithm
that can operate in memory-limited settings, when the dataset does not
fit in main memory.

We empirically demonstrate that our algorithm is both accurate and
efficient for a variety of real world data sets. In comparison to
other tree-building algorithms (that are both online and multipass),
\alg achieves the highest dendrogram purity in addition to being efficient.
When compared to flat clustering algorithms where the number of
clusters is given by an oracle, \alg with a pruning heuristic
outperforms or is competitive with all other scalable algorithms.
In both comparisons to flat and tree building algorithms,
\alg scales best with the number of clusters $K$.

	\section{The Clustering Problem}
\label{sec:ct}
In a \emph{clustering problem} we are given a dataset $X =
\{x_i\}_{i=1}^N$ of \emph{points}. The goal is to partition the
dataset into a set of disjoint subsets (i.e., clusters),
$\mathcal{C}$, such that the union of all subsets covers the
dataset. Such a set of subsets is called a \emph{clustering}.  A
high quality clustering is one in which the points in any particular
subset are more similar to each other than the points in other
subsets.

A clustering, $\mathcal{C}$, can be represented as a map from points
to clusters identities, $\mathcal{C}: X \to \{1,\ldots,K\}$.  However,
structures that encode more fine-grained information also exist. For
example, prior works construct a \emph{cluster tree} over the dataset
to compactly encode multiple alternative \emph{tree-consistent
  partitions}, or clusterings \cite{heller2005bayesian,
  blundell2011discovering}. 

\begin{definition}[Cluster tree~\cite{krishnamurthy2012efficient}]
A binary \textbf{cluster tree} $\mathcal{T}$ on a dataset
$\{x_i\}_{i=1}^N$ is a collection of subsets such that $C_0 \triangleq
\{x_i\}_{i=1}^N \in \mathcal{T}$ and for each $C_i,C_j \in
\mathcal{T}$ either $C_i \subset C_j$, $C_j \subset C_i$ or $C_i \cap
C_j = \emptyset$.  For any $C \in \mathcal{T}$, if $\exists C' \in
\mathcal{T}$ with $C' \subset C$, then there exists two $C_L,C_R\in
\mathcal{T}$ that partition $C$.
\end{definition}

Given a \emph{cluster tree}, each internal node can be associated with
a cluster that includes its descendant points. A tree-consistent
partition is a subset of the nodes in the tree whose associated
clusters partition the dataset, and hence is a valid clustering.

Cluster trees afford multiple advantages in addition to their
representational power.  For example, when building a cluster tree it
is typically unnecessary to specify the number of target clusters
(which, in virtually all real-world problems, is unknown). Cluster
trees also provide the opportunity for efficient cluster assignment
and search, which is particularly important for large datasets with
many clusters. In such problems, $O(K)$ search required by classical
methods can be prohibitive, while a top down traversal of a cluster tree
could offer $O(\log(K))$ search, which is exponentially faster.

Evaluating a cluster tree is more complex than evaluating a \emph{flat
  clustering}. Assuming that there exists a ground truth clustering
$\cstar = \{C_k^\star\}_{i=1}^K$ into $K$ clusters, it is common to
measure the quality of a cluster tree based on a single clustering
extracted from the tree. We follow Heller et. al and adopt a more
holistic measure of the tree quality, known as \emph{dendrogram
  purity}~\cite{heller2005bayesian}.  Define
\begin{align*}
  \mathcal{P}^\star = \{(x_i,x_j) \mid \cstar(x_i) = \cstar(x_j)\}
\end{align*}
to be the pairs of points that are clustered together in the ground
truth. Then, dendrogram purity is defined as follows:

\begin{definition}[Dendrogram Purity]
Given a cluster tree $\mathcal{T}$ over a dataset $X
=\{x_i\}_{i=1}^N$, and a true clustering $\cstar$, the \textbf{dendrogram
purity} of $\mathcal{T}$ is
\begin{align*}
\texttt{DP}(\mathcal{T}) = \frac{1}{|\mathcal{P}^\star|}\sum_{k=1}^K\sum_{x_i,x_j \in \cstar_k}\texttt{pur}(\lvs{\texttt{LCA}(x_i,x_j)}, \cstar_k)
\end{align*}
where $\texttt{LCA}(x_i,x_j)$ is the least common ancestor of $x_i$
and $x_j$ in $\mathcal{T}$, $\lvs{z}\subset X$ is the set of
leaves for any internal node $z$ in $\mathcal{T}$, and
\texttt{pur}$(S_1,S_2) = |S_1 \cap S_2|/|S_1|$.
\end{definition}

In words, the dendrogram purity of a tree with respect to a ground
truth clustering is the expectation of the following random process:
(1) sample two points, $x_i,\ x_j$, uniformly at random from the pairs in the ground
truth (and thus $\cstar(x_i) = \cstar(x_j)$), (2) compute their least common ancestor in
$\Tcal$ and the cluster (i.e. descendant leaves) of that internal node, (3)
compute the fraction of points from this cluster that also belong to $\cstar(x_i)$.
For large-scale problems, we use Monte Carlo approximations of dendrogram purity.  More intuitively,
dendrogram purity obviates the (often challenging) task of extracting
the best tree-consistent partition, while still providing a meaningful
measure of overlap with the ground truth flat clustering.

	\section{Tree Construction}
Our work is focused on instances of the clustering problem in which
the size of the dataset $N$ \emph{and} the number of clusters $K$ are
both very large.  In light of their advantages with respect to
efficiency and representation (Section \ref{sec:ct}), our method
builds a cluster tree over data points. We are also interested in the
\emph{online} problem setting--in which data points arrive one at a
time--because this resembles real-world scenarios in which new data is
constantly being created. Well-known clustering methods based on
cluster trees, like hierarchical agglomerative clustering, are
often not online; there exist a few online cluster tree approaches,
most notably BIRCH~\cite{zhang1996birch}, but empirical results show that BIRCH
typically constructs worse clusterings than most
competitors~\cite{o2002streaming, ackermann2012streamkm++}.

The following subsections describe and analyze several fundamental
components of our algorithm, which constructs a cluster tree in an
online fashion. The data points are assumed to reside in Euclidean
space: $\{x_i\}_{i=1}^N \subset \mathbb{R}^d$.  We make no assumptions
on the order in which the points are processed.

\subsection{Preliminaries}
\label{subsec:separability}
In the clustering literature, it is common to make various
\emph{separability assumptions} about the data being clustered. As
one example, a set of points is $\epsilon$-separated for \km\ if the ratio of
the clustering cost with $K$ clusters to the cost with $K-1$ clusters
is less than $\epsilon^2$ \cite{ostrovsky2006effectiveness}. While
assumptions like these generally do not hold in practice, they
motivate the derivation of several powerful and justifiable
algorithms. To derive our algorithm, we make a strong separability
assumption under $\cstar$.

\begin{assumption}[Separability]
\label{def:delsep}
A data set $X$ is \textbf{separable} with respect to a clustering $\cstar$ if
\begin{align*}
    \max_{(x,y) \in \mathcal{P}^\star}\|x-y\| < \min_{(x',y') \notin \mathcal{P}^\star} \|x'-y'\|,
\end{align*}
where $\|\cdot\|$ is the Euclidean norm.
We assume that $X$ is \emph{separable} with respect to $\cstar$.
\end{assumption}
Thus under separability, the true clustering has all within-cluster
distances smaller than any between-cluster distance.  The assumption
is quite strong, but it is not without precedent. As just one example,
the NCBI uses a form of separability in their \emph{clique}-based
approach for protein clustering~\cite{ncbi}. Moreover, separability
aligns well with existing clustering methods; for example under
separability, agglomerative methods like complete, single, and average linkage
are guaranteed to find $\cstar$, and $\cstar$ is guaranteed to contain the
unique optimum of the $K$-center cost function. Lastly, we use
separability primarily for theoretically grounding the design of our
algorithm, and we do not expect it to hold in practice. Our empirical
analysis shows that our algorithm outperforms existing methods even
without separability assumptions.

\subsection{The Greedy Algorithm and Masking}
\label{subsec:greedy}
The derivation of our algorithm begins from an attempt to remedy
issues with greedy online tree construction.  Consider the following
greedy algorithm: when processing point $x_i$, a search is preformed
to find its nearest neighbor in the current tree. The search returns a
leaf node, $\ell$, containing the nearest neighbor of $x_i$.  A
\texttt{Split} operation is performed on $\ell$ that: (1) disconnects
$\ell$ from its parent, (2) creates a new leaf $\ell'$ that stores
$x_i$, (3) creates a new internal node whose parent is $\ell$'s former
parent and with children $\ell$ and $\ell'$.

\begin{fact}
There exists a separable clustering instance in $1$-dimension with two
balanced clusters where the greedy algorithm has dendrogram purity at
most $7/8$.
\end{fact}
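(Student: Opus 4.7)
The plan is to exhibit an explicit one-dimensional, balanced, separable instance together with an adversarial insertion order for which greedy produces a tree with $\texttt{DP} = 7/8$.

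Fix the instance: cluster $A = \{0,1,2,3\}$ and cluster $B = \{M, M+1, M+2, M+3\}$ for any $M > 6$. The maximum within-cluster distance is $3$ and the minimum between-cluster distance is $M - 3 > 3$, so Assumption~\ref{def:delsep} holds and the two clusters are balanced with size $4$ each.

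Process the points in the adversarial order $0, 1, M, 2, M+1, 3, M+2, M+3$. The decisive feature is that the first two insertions both come from $A$, so greedy's initial split makes $0$ a sibling of $1$; once $M$ arrives its nearest neighbor is $1$ (not $0$), which pushes $0$ into the role of a direct child of the root. A short induction then shows that $0$ is never again chosen as a nearest neighbor: for each remaining $A$-point $k\in\{2,3\}$ the linear arrangement gives $|k-(k-1)|=1 < k = |k-0|$, and for each $B$-point $M+i$ separability gives $|M+i - 0| = M+i > 3 \ge $ its distance to an already-inserted $B$-point. Hence $0$ remains a direct child of the root for the entire run, and the sibling subtree evolves into a perfect separation of $\{1,2,3\}$ from $B$.

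The dendrogram purity is then a short count over the $|\pstar| = 2\binom{4}{2} = 12$ ground-truth pairs: the three $A$-pairs containing $0$ all have LCA equal to the root, whose leaf set is the full $8$-point dataset and thus contributes purity $4/8 = 1/2$ each, while the other three $A$-pairs and all six $B$-pairs have LCAs inside pure subtrees and contribute purity $1$. Summing, $\texttt{DP} = (3 \cdot \tfrac{1}{2} + 3 + 6)/12 = 10.5/12 = 7/8$, establishing the claimed bound. The only real obstacle is the bookkeeping step confirming that leaf $0$ never becomes a nearest neighbor after step~$2$, which is the pair of inequalities above.
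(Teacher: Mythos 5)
Your proof is correct and follows essentially the same strategy as the paper's: a one-dimensional separable two-cluster instance with an adversarial arrival order that causes greedy to strand part of the first cluster as a direct child of the root, followed by a count of the ground-truth pairs whose LCA is the root (purity $1/2$ there, purity $1$ elsewhere). Your version is a fully explicit eight-point witness achieving dendrogram purity exactly $7/8$, whereas the paper argues the upper bound for a family of instances with half of cluster one near $-1$ and half near $+1$; both yield the stated bound.
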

\begin{proof}
The construction has two clusters, one with half its points near $-1$
and half its points near $+1$, and another with all points around
$+4$, so the instance is separable. If we show one point near $-1$,
one point near $+1$, and then a point near $+4$, one child of the root
contains the latter two points, so the true clustering is
irrecoverable. To calculate the purity, notice that at least $1/2$ of
the pairs from cluster one have the root as the LCA, so their purity
is $1/2$ and the total dendrogram purity is at most $7/8$.
\end{proof}

It is easy to generalize the construction to more clusters
and higher dimension, although upper bounding the dendrogram purity
may become challenging.  The impurity in this example is a
result of the leaf at $+1$ becoming \emph{masked} when $+4$ is
inserted.

\begin{definition}[Masking]
A node $v$ with sibling $v'$ and aunt $a$ in a tree $\Tcal$ is
\textbf{masked} if there exists a point $x \in \lvs{v}$ such that
\begin{align}
\max_{y \in \lvs{v'}}\|x - y\| > \min_{z \in \lvs{a}} \|x-z\|.
\label{eq:masking_def}
\end{align}
\end{definition}
Thus, $v$ contains a point $x$ that is closer to some point in the
aunt $a$ than some point in the sibling $v'$. Intuitively, masking
happens when a point is misclustered. For example when a point
belonging to the same cluster as $v$'s leaves is sent to $a$, then $v$
becomes masked.  A direct child of the root cannot be
masked since it has no aunt.

Under separability, masking is intimately related to dendrogram
purity, as demonstrated in the following result.
\begin{fact}
If $X$ is {separated} w.r.t. \cstar and a cluster tree
$\mathcal{T}$ contains no masked nodes, then it has dendrogram purity 1.
\end{fact}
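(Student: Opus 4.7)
My plan is a proof by contradiction. Assume $\Tcal$ has no masked nodes and $X$ is separable but that $\Tcal$ does not have dendrogram purity $1$. As a preliminary step, note that dendrogram purity $1$ is equivalent to every cluster $\cstar_k$ being exactly the leaf set $\lvs{v}$ of some internal node $v$ of $\Tcal$, because then the LCA of any same-cluster pair sits inside that $v$ and is pure. So some cluster $\cstar_k$ fails this: letting $v^\star$ be the LCA of $\cstar_k$ in $\Tcal$, we must have $\cstar_k \subsetneq \lvs{v^\star}$, and hence a foreign point $z \notin \cstar_k$ lies in one child of $v^\star$---WLOG the left child $v_L$---while both $v_L$ and $v_R$ still contain $\cstar_k$ points by the LCA property.

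The combinatorial gadget I would use is a node $u \subseteq v_L$ whose two children $u_1,u_2$ are split cleanly under the ``in $\cstar_k$ vs.\ not in $\cstar_k$'' coloring, with $\lvs{u_1}\subseteq\cstar_k$ and $\lvs{u_2}\cap\cstar_k=\emptyset$ (both non-empty). Such $u$ exists by a greedy descent from $v_L$: whenever the current node has a bichromatic child, descend into it; the process must halt at a $u$ whose two children are each monochromatic and, because $\lvs{u}$ is bichromatic, of opposite colors. In the easy case $u=v_L$, the aunt of $u_1$ is exactly $v_R$, which contains a $\cstar_k$ point, so for any $x \in \lvs{u_1}\subseteq\cstar_k$ the max distance to $\lvs{u_2}$ is between-cluster while the min distance to $\lvs{v_R}$ is within-cluster; separability then forces $\max>\min$ and hence $u_1$ is masked, contradicting the hypothesis.

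The substantive case is $u \subsetneq v_L$, where $u$'s aunt lies inside $v_L$ and need not contain any $\cstar_k$ point. Here I would walk up the ancestor chain $w_0=u, w_1,\ldots,w_t=v_L$, denote the sibling of $w_j$ by $\sigma_j$ (so $\sigma_t=v_R$), and prove by induction on $j\ge 1$ that $\sigma_j$ contains no point of any cluster represented in $\lvs{u}$. For the base case, apply no-masking to $u_1$ and to $u_2$: for any cluster $\cstar_m$ with $\lvs{u}\cap\cstar_m\neq\emptyset$, an $x$ in this intersection has max distance to the opposite monochromatic child of $u$ at least the minimum between-cluster distance, so the min distance into the aunt $\sigma_1$ must also exceed every within-cluster distance, forcing $\sigma_1\cap\cstar_m=\emptyset$. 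The inductive step $j\to j+1$ uses no-masking on $w_j$ with the same template: by the inductive hypothesis every $x\in\lvs{u}$ sees only between-cluster distances into $\sigma_j$, so the conclusion transfers to the aunt $\sigma_{j+1}$.

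Pushing the induction to $j=t$---equivalently, applying no-masking at $w_{t-1}$ whose aunt is $v_R$---forces $v_R$ to be disjoint from $\cstar_k$, contradicting the LCA property that $v_R\cap\cstar_k\neq\emptyset$. The main bookkeeping worry is the boundary when $v^\star$ is the root of $\Tcal$: then $v_L$ has no aunt and one cannot apply no-masking to $v_L$ itself, but the decisive step actually occurs one level below at $w_{t-1}$, whose aunt is always $v_R$ irrespective of whether $v^\star$ is the root. Given this setup the quantitative content is just repeated use of the separability inequality.
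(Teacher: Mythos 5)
Your argument is correct and is essentially the paper's own proof run in the contrapositive direction: where the paper walks up from $\texttt{LCA}(x_i,x_k)$ until some sibling or aunt contains a same-cluster point (thereby exhibiting a masked node), you walk up from a cleanly split node $u$ propagating the invariant that each sibling along the path is disjoint from the clusters represented in $\lvs{u}$, arriving at the same contradiction with $v_R \cap \cstar_k \neq \emptyset$. The only blemish is an off-by-one label in your base case: under your convention the aunt of $u_1$ and $u_2$ is $\sigma_0$ (the sibling of $u = w_0$), not $\sigma_1$, so the statement for $\sigma_1$ requires one extra application of your inductive step at $w_0$ --- which goes through verbatim.
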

\begin{proof}
  Assume that $\Tcal$ does not have dendrogram purity 1. Then there
  exists points $x_i$ and $x_j$ in $\lvs{\Tcal}$ such that
  $\cstar(x_i) = \cstar(x_j)$ but $\lvs{\texttt{LCA}(x_i, x_j)}$
  contains a point $x_k$ in a different cluster. The least common
  ancestor has two children $v_\ell, v_r$ and $x_i,x_j$ cannot be in
  the same child, so without loss of generality we have $x_i, x_k \in
  v_l$ and $x_j \in v_r$.  Now consider $v =
  \texttt{LCA}(x_i,x_k)$ and $v$'s sibling $v'$.
  If $v'$ contains a point belonging to $\cstar(x_i)$, then the child
  of $v$ that contains $x_i$ is masked since $x_i$ is closer
  to that point than it is to $x_k$.
  If the aunt of $v$ contains a point
  belonging to $\cstar(x_i)$ then $v$ is masked for the same reason.
  If the aunt contains only points
  that do not belong to $\cstar(x_i)$ then examine $v$'s parent and
  proceed recursively. This process must terminate since we are below
  $\texttt{LCA}(x_i, x_j)$. Thus the leaf containing $x_i$ or one
  of $x_i$'s ancestors must be masked.
\end{proof}

\subsection{Masking Based Rotations}
\label{subsec:mask-rot}
Inspired by self-balancing binary search
trees~\cite{sedgewick1988algorithms}, we employ a novel
\emph{masking-based rotation} operation that alleviates purity errors
caused by masking. In the greedy algorithm, after inserting a new
point as leaf $\ell'$ with sibling $\ell$, the algorithm checks if
$\ell$ is masked. If masking is detected, a \texttt{Rotate} operation is
performed, which swaps the positions of $\ell'$ and its aunt in the
tree (See Figure~\ref{fig:rotate}). After the rotation, the algorithm
checks if $\ell$'s new sibling is masked and recursively applies
rotations up the tree. If no masking is detected at any point in the
process, the algorithm terminates.

At this point in the discussion, we check for masking exhaustively via
Equation~\eqref{eq:masking_def}. In the next section we introduce
approximations for scalability.

\begin{figure}[t]
\begin{center}
\centerline{\includegraphics[width=0.5\columnwidth]{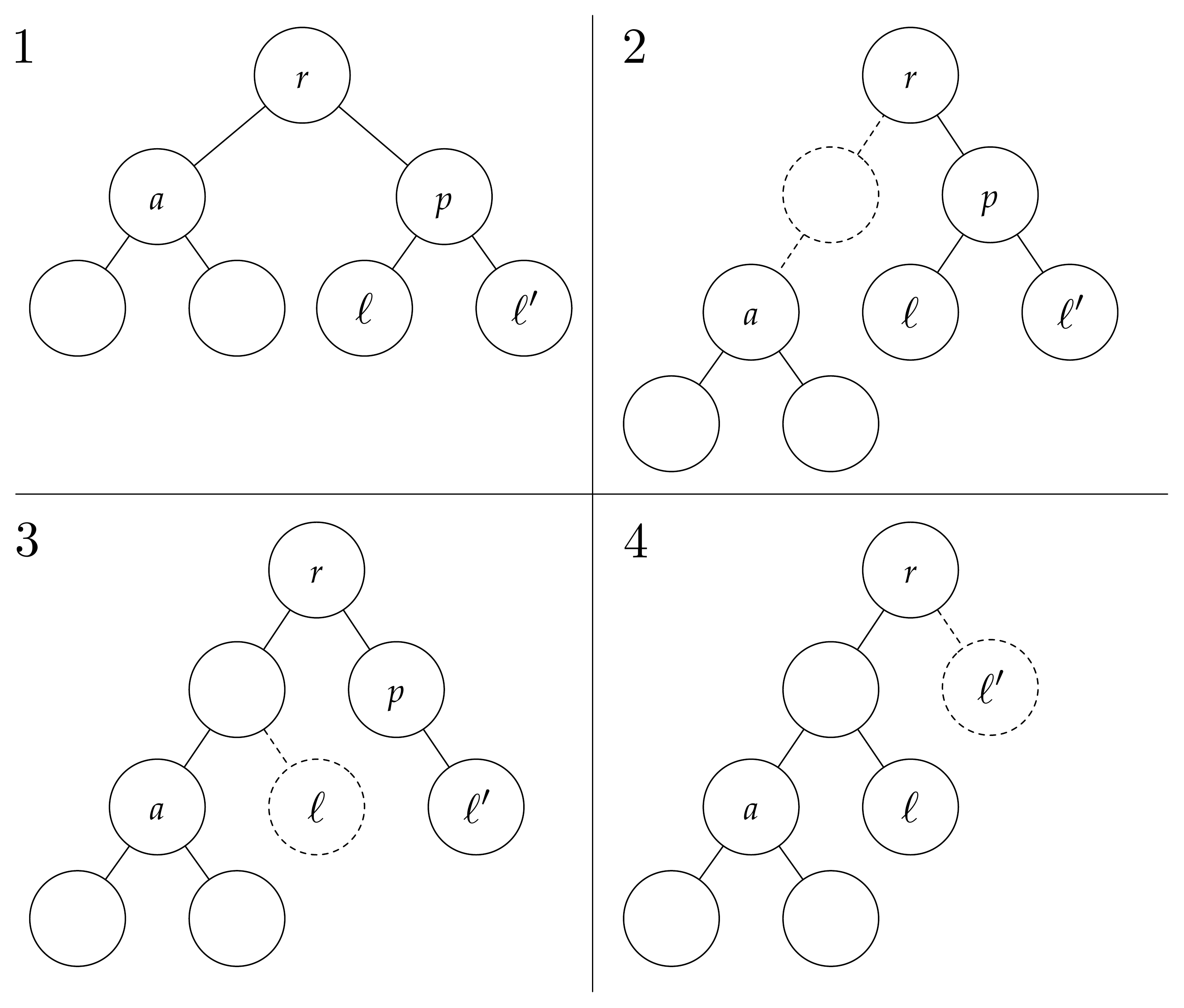}}
\caption{The \texttt{Rotate} procedure in four steps. In each
  step, the change to the tree is indicated by dotted lines.}
\label{fig:rotate}
\end{center}
\end{figure}

\begin{theorem}
\label{thm:rotate-opt}
  If $X$ is {separated} w.r.t. \cstar, the greedy algorithm with
  masking-based rotations constructs a cluster tree with dendrogram purity
  1.0.
\end{theorem}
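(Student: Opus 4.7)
My plan is to prove the theorem by induction on the number of inserted points, maintaining the invariant that after each insertion and its masking-based rotations, every true cluster $C^\star_k$ with points in the current tree forms a subtree (``goodness''). Under separability, goodness immediately implies dendrogram purity $1$: for any two same-cluster points, their least common ancestor sits inside the cluster's subtree, so the LCA's leaf set lies entirely within that cluster, giving $\texttt{pur} = 1$ for every ground-truth pair. The base case is immediate, since trees with at most two leaves trivially satisfy this invariant.

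For the inductive step I would assume $\Tcal_i$ is good, insert $x_{i+1}$ with nearest-neighbor leaf $\ell$, and track the resulting tree through the rotations. Let $C = \cstar(x_{i+1})$. The analysis splits into two cases. In case (A), $C$ already has points in $\Tcal_i$; by separability $\ell$ must lie in $C$, and the greedy insertion immediately places $\ell'$ as a sibling of $\ell$ inside the existing subtree $T_C$, extending it by one leaf without disturbing goodness. In case (B), $C$ is new and $\ell$ belongs to some other cluster $C'$; the initial insertion makes the new parent $p$'s subtree mixed, so $C'$ temporarily fails to be a subtree and rotations are required. In both cases only the ancestor chain of $\ell'$ is modified, and each rotation lifts $\ell'$ one level up this chain by swapping it with its current aunt.

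The heart of the argument is tracking this chain of rotations and showing it halts at the correct height: at the boundary of $T_C$ in case (A), or just above the smallest subtree entirely containing $C'$ in case (B). Separability is the essential tool here, since the masking check driving each rotation compares a within-cluster distance to a between-cluster distance and is therefore resolved cleanly by the assumption. The main obstacle will be the bookkeeping for the off-path ``aunt'' subtrees whose siblings are on-path ancestors of $\ell'$: these nodes newly acquire $x_{i+1}$ in their sibling subtrees, which in case (B) threatens to spread $C'$ across disjoint subtrees. I would show, again using separability together with the subtree structure inherited from $\Tcal_i$, that the recursive rotation sequence lifts $\ell'$ past precisely these nodes, so every cluster remains a subtree in $\Tcal_{i+1}$; goodness is restored at every level, and the invariant propagates.
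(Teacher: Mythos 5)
Your plan is essentially the paper's own argument: the paper also proves the theorem by induction on insertions, maintaining that the tree has dendrogram purity $1$ (equivalent to your ``goodness'' invariant that each cluster's points occupy a pure subtree), splitting into the same two cases (the cluster of the new point is already represented vs.\ new), and using separability both to place the nearest-neighbor leaf inside the correct pure subtree and to show the rotation chain cannot cross the boundary of a pure subtree in case (A) and must lift the new leaf out of it in case (B). The paper's published proof is itself only a sketch at roughly the same level of detail, so the bookkeeping you flag as the main remaining obstacle is likewise left implicit there.
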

\begin{proof}[Proof Sketch]
Inductively, assume our current cluster tree $\mathcal{T}$ has
dendrogram purity 1.0, and we process a new point $x_i$ belonging to
ground truth cluster $C^\star \in \cstar$.  In the first case, assume
that $\mathcal{T}$ already contains some members of $C^\star$ that are
located in a (pure) subtree, $\Tcal[C^\star]$. Then, by separability
$x_i$'s nearest neighbor must be in $\Tcal[C^\star]$ and if rotations
ensue, no internal node $v \in \Tcal[C^\star]$ can be rotated outside
of $T[C^\star]$. To see why, observe that the children of
$\Tcal[C^\star]$'s root cannot be masked since this is a pure subtree
and before insertion of $v$ the full tree itself was pure (so no
points from $C^\star$ can be outside of $\Tcal[C^\star]$).  In the
second case, assume that $T$ contains no points from cluster
$C^\star$. Then, again by separability, recursive rotations must lift
$x_i$ out of the pure subtree that contains its nearest neighbor,
which allows $\mathcal{T}$ to maintain perfect purity.
\end{proof}

	\section{Scaling}
\label{sec:scaling}
Two sub-procedures of the algorithm described above can render its
naive implementation slow: finding nearest neighbors and checking
whether a node is masked. Therefore, in this section we introduce
several approximations that make our final algorithm, \alg, scalable.
First, we describe a \emph{balance-based rotation} procedure that
helps to balance the tree and makes insertion of new points much
faster. Then we discuss a \emph{collapsed mode} that allows our
algorithm to scale to datasets that do not fit in memory. Finally, we
introduce a bounding-box approximation that makes both nearest
neighbor and masking detection operations efficient.

\subsection{Balance Rotations}
\label{subsec:balance}
While our rotation algorithm (Section \ref{subsec:mask-rot})
guarantees optimal dendrogram purity in the separable case, it does
not make any guarantees on the depth of the tree, which influences the
running time. Naturally, we would like to construct balanced binary
trees, as these lead to logarithmic time search and insertion. We use
the following notion of balance.

\begin{definition}[Cluster Tree Balance]
\label{def:balance}
The balance of a cluster tree $\Tcal$, denoted $\bal{\mathcal{T}}$, is
the average \emph{local balance} of all nodes in $\Tcal$, where the
local balance of a node $v$ with children $v_\ell,v_r$ is $\bal{v} =
\frac{\min\{|\lvs{v_\ell}|,|\lvs{v_r}|\}}{\max\{|\lvs{v_\ell}|,|\lvs{v_r}|\}}$.
\end{definition}

To encourage balanced trees, we use a balance-based rotation
operation. A balance-rotation with respect to a node $v$ with sibling
$v'$ and aunt $a$ is identical to a masking-based rotation with
respect to $v$, except that it is triggered when
\begin{itemize}
\item [1)] the rotation would produce a tree $\Tcal'$ with $\bal{\Tcal'}
  > \bal{\Tcal}$; and
\item [2)] there exists a point $x_i \in \lvs{v}$ such that $x_i$ is closer
  to a leaf of $a$ than to some leaf of $v'$
  (i.e. Equation~\eqref{eq:masking_def} holds).
\end{itemize}
Under separability, this latter check ensures that the balance-based
rotation does not compromise dendrogram purity.

\begin{fact}
\label{thm:balance-rot}
Let $X$ be a dataset that is separable w.r.t $\cstar$.  If $\Tcal$ is
a cluster tree with dendrogram purity 1 and $\Tcal'$ is the result of
a balance-based rotation on some node in $\Tcal$, then $\bal{\Tcal'}>
\bal{\Tcal}$ and $\Tcal'$ also has dendrogram purity 1.0.
\end{fact}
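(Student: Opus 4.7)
The balance inequality $\bal{\Tcal'} > \bal{\Tcal}$ is immediate from trigger condition~(1) of a balance rotation, which only fires when this strict increase happens. For the purity claim, the plan is to classify the ways the ground-truth clusters can meet the three subtrees $\lvs{v}, \lvs{v'}, \lvs{a}$ under the assumption $\texttt{DP}(\Tcal)=1$, and to show that the one configuration under which the swap of $v$ and $a$ could break purity is exactly the configuration that trigger condition~(2) rules out under separability.

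Dendrogram purity~1 forces a strong structural constraint: for every ground-truth cluster $C_k^\star$ the LCA of all its points has a leaf set contained in (hence equal to) $C_k^\star$, so each cluster corresponds to a unique pure node $w_k$ with $\lvs{w_k}=C_k^\star$. Writing $p$ for the parent of $a$ and $q$, and $q$ for the parent of $v$ and $v'$, I would then classify how a cluster can straddle two of $\{v,v',a\}$. If $C_k^\star$ has points in both $v$ and $a$, or in both $v'$ and $a$, its cluster node must be $p$, pinning $\lvs{p}\subseteq C_k^\star$ and forcing all three subtrees into $C_k^\star$. If $C_k^\star$ straddles only $v$ and $v'$, its cluster node must be $q$, pinning $\lvs{q}=C_k^\star$ but leaving $a$ free to lie outside $C_k^\star$. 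Combined with the possibility of no straddling, this yields three exhaustive cases:
\begin{enumerate}
\item[(I)] $\lvs{v}\cup\lvs{v'}\cup\lvs{a}\subseteq C_k^\star$ for some $k$;
\item[(II)] each cluster meeting any of $v,v',a$ lies entirely inside exactly one of them;
\item[(III)] $\lvs{q}=C_k^\star$ for some $k$ and $\lvs{a}\cap C_k^\star=\emptyset$.
\end{enumerate}

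Case~(I) is trivial, since every affected leaf lies in the same cluster. For Case~(II), the rotation leaves the internal structure of $v$, $v'$, and $a$ untouched, so every cluster node $w_k$ (which by the case hypothesis lies strictly inside one of the three subtrees) is preserved; the only LCAs that actually move are those of $(v,v')$-pairs (from $q$ to $p$) and of $(v',a)$-pairs (from $p$ to $q'$), and under Case~(II) both pair types are cross-cluster and so are not scored by $\texttt{DP}$. Case~(III) is the one dangerous case, since after the rotation the LCA of a $v$-point and a $v'$-point becomes $p$, whose leaves now include the non-$C_k^\star$ points in $\lvs{a}$. I rule it out via trigger condition~(2): for any $x_i\in\lvs{v}$, any $y\in\lvs{v'}$, and any $z\in\lvs{a}$, both $x_i$ and $y$ lie in $C_k^\star$ while $z\notin C_k^\star$, so separability (Assumption~\ref{def:delsep}) gives $\|x_i-y\|<\|x_i-z\|$; maximizing over $y$ and minimizing over $z$ then yields $\max_{y\in\lvs{v'}}\|x_i-y\|<\min_{z\in\lvs{a}}\|x_i-z\|$ for every $x_i\in\lvs{v}$, which directly negates condition~(2).

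The main obstacle will be the case enumeration itself: one needs to verify carefully that purity~1 really forces every potentially problematic cluster into exactly one of the three subtrees or into the single straddling pattern~(III), so that intermediate scenarios (for example $a$ holding a strict subset of $C_k^\star$'s points while the rest live in $\lvs{v}\cup\lvs{v'}$) are genuinely precluded. Once this rigidity is pinned down, separability closes Case~(III) in a single line.
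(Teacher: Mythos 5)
Your proposal is correct and follows essentially the same route as the paper's proof: the balance claim is read off from trigger condition~(1), and purity is preserved because the only dangerous configuration --- $v$ and $v'$ in one cluster with $a$ outside it --- is exactly where separability forces condition~(2) to fail. Your three-way case analysis simply makes rigorous the paper's one-line assertion that this is the only situation in which a rotation could harm purity.
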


\begin{proof}
The only situation in which a rotation could impact the dendrogram
purity of a pure cluster tree $\Tcal$ is when $v$ and $v'$ belong to
the same cluster, but their aunt $a$ belongs to a different
cluster. In this case, separation ensures that
\begin{align*}
\max_{x \in \lvs{v}, y \in \lvs{v'}} \|x - y\| \le \min_{x' \in
  \lvs{v}, z \in \lvs{a}} \|x' - z\|,
\end{align*}
so a rotation will not be triggered.  Clearly, $\bal{\Tcal'} >
\bal{\Tcal}$ since it is explicitly checked before performing a
rotation.
\end{proof}

After inserting a new point and applying any masking-based rotations,
we check for balance-based rotations at each node along the path from
the newly created leaf to the root.

\subsection{Collapsed Mode}
\label{subsec:collapsing}
For extremely large datasets that do not fit in memory,
we use a \emph{collapsed mode} of \alg. In this mode, the algorithm
takes a parameter that is an upper bound on the number of leaves in the
cluster tree, which we denote with $L$.  After balance rotations, our
algorithm invokes a \texttt{Collapse} procedure that merges leaves as
necessary to meet the upper bound. This is similar to work in
hierarchical extreme classification in which the depth of the model is
a user-specified parameter~\cite{daume2016logarithmic}.

The \texttt{Collapse} operation may only be invoked on an internal
node $v$ whose children are both leaves. The procedure makes $v$ a
\emph{collapsed leaf} that stores the (ids, not the features, of the)
points associated with its children along with sufficient statistics
(Section \ref{subsec:bounding-box}), and then deletes both children.
The points stored in a collapsed leaf are
never split by any flavor of rotation. The \texttt{Collapse} operation
may be invoked on internal nodes whose children are collapsed leaves.

When the cluster tree has $L+1$ leaves, we collapse the node whose
maximum distance between children is minimal among all collapsible
nodes, and we use a priority queue to amortize the search for this
node. Collapsing this node is guaranteed to preserve dendrogram purity
in the separable case.

\begin{fact}
  \label{thm:collapse}
  Let $X$ be a dataset separable w.r.t. $\cstar$ which has $K$
  clusters, if $L > K$, then collapse operations preserve perfect
  dendrogram purity.
\end{fact}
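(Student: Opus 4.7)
The plan is to proceed by induction on the number of \texttt{Collapse} operations. Inductively suppose that immediately before a collapse, the cluster tree $\Tcal$ has $L+1$ leaves and dendrogram purity $1$; in particular, every leaf (collapsed or not) contains points from a single cluster of $\cstar$, and for each cluster $C$ appearing in $\Tcal$ the points of $C$ form a pure subtree $\Tcal[C]$ (the subtree rooted at the LCA of all $C$-points). The goal is to show that the single \texttt{Collapse} chosen by the algorithm also preserves purity; chaining these steps then gives the claim.

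The main step is to show that the collapsible node selected by the algorithm has both children belonging to the same ground-truth cluster. Since $L+1 > K$, pigeonhole yields two distinct leaves of $\Tcal$ belonging to some common cluster $C$; hence $\Tcal[C]$ contains at least two leaves, and any deepest internal node of $\Tcal[C]$ has both children as leaves (possibly collapsed) labeled by $C$, which I will call a \emph{pure collapsible node}. For any pure collapsible node $v$ with children $\ell_1, \ell_2$, the score $\max_{x \in \lvs{\ell_1},\, y \in \lvs{\ell_2}} \|x-y\|$ is a within-cluster distance in $\pstar$, while for any \emph{impure} collapsible node (children from different clusters) the corresponding score is a distance in the complement of $\pstar$. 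By Assumption~\ref{def:delsep}, every within-cluster distance is strictly less than every between-cluster distance, so the argmin over collapsible nodes is necessarily pure.

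Finally, collapsing a pure collapsible node $v$ with children labeled $C$ preserves $\texttt{DP}(\Tcal)=1$: the new collapsed leaf contains only $C$-points, and the tree structure strictly above $v$ is unchanged, so the leaf-set of every ancestor is unchanged as a set of points. For same-cluster pairs whose LCA used to lie inside $v$ (i.e., at $v$ itself), the new LCA is the collapsed leaf, whose leaf-set is pure in $C$; for all other same-cluster pairs the LCA node and its leaf-set are unchanged. Hence every purity term remains $1$. The main obstacle is making the min/argmin comparison in the second paragraph valid when children are collapsed leaves holding many points rather than singletons; this is exactly where the inductive hypothesis that collapsed leaves are monochromatic under $\cstar$ is essential, since it guarantees the pairwise distances appearing in each score are honest within- or between-cluster distances to which separability applies directly.
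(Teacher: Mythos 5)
Your proof is correct and takes essentially the same route as the paper's: induction on collapse operations, pigeonhole ($L+1$ monochromatic leaves, $K$ clusters) to exhibit a pure collapsible node, and separability to argue that the priority-queue score (max distance between children) of every pure collapsible node beats that of every impure one, so the selected collapse is pure. Your write-up is somewhat more careful than the paper's one-liner --- in particular in handling collapsed leaves that hold many points rather than the paper's ``pure 2-point subtrees,'' and in explicitly verifying that collapsing a pure node leaves every dendrogram-purity term equal to 1 --- but the underlying argument is the same.
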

\begin{proof}
  Inductively, assume that the cluster tree has dendrogram purity 1.0
  before we add a point that causes us to collapse a node. Since $L >
  K$ and all subtrees are pure, by the pigeonhole principle, there
  must be a pure subtree containing at least $2$ points. By
  separability, all such pure $2$-point subtrees will be at the front
  of the priority queue, before any impure ones, and hence the
  collapse will not compromise purity.
\end{proof}

\begin{algorithm}[t]
   \caption{\texttt{Insert}$(x_i, \Tcal)$}
   \label{alg:perch}
\begin{algorithmic}
   \STATE $t = $\texttt{ NearestNeighbor}$(x_i)$
   \STATE $l = $\texttt{ Split}$(t)$
   \FOR{$a$ in \texttt{Ancestors}$(l)$}
   \STATE $a.$\texttt{AddPt}$(x_i)$
   \ENDFOR
   \STATE $T = T.\texttt{RotateRec}(\ell.\texttt{Sibling}(), \texttt{CheckMasked})$
   \STATE $T = T.\texttt{RotateRec}(\ell.\texttt{Sibling}(), \texttt{CheckBalanced})$
   \IF{$\texttt{CollapseMode}$}
   \STATE $T.\texttt{TryCollapse()}$.
   \ENDIF
   \STATE {\bfseries Output:} $T$
\end{algorithmic}
\end{algorithm}

\begin{algorithm}[t]
   \caption{\texttt{RotateRec}$(v, \Tcal, \texttt{func})$}
   \label{alg:perch}
\begin{algorithmic}
   \STATE (\texttt{ShouldRotate,ShouldStop)} = $\texttt{func}(v)$
   \IF{$\texttt{ShouldRotate}$}
   \STATE $T.\texttt{Rotate}(v)$
   \ENDIF
   \IF{$\texttt{ShouldStop}$}
   \STATE {\bfseries Output:} $\Tcal$
   \ELSE
   \STATE {\bfseries Output:} $\Tcal.\texttt{RotateRec}(v.\texttt{Parent}(), \Tcal, \texttt{func})$
   \ENDIF
\end{algorithmic}
\end{algorithm}

\subsection{Bounding Box Approximations}
\label{subsec:bounding-box}
Many of the operations described thus far depend on nearest neighbor
searches, which in general can be computationally intensive. We
alleviate this complexity by approximating a set (or subtree) of
points via a bounding box. Here each internal node maintains a
bounding box that contains all of its leaves. Bounding boxes are easy
to maintain and update in $\mathcal{O}(d)$ time.

Specifically, for any internal node $v$ whose bounding interval in
dimension $j$ is $[v_-(j),v_+(j)]$, we approximate the squared minimum
distance between a point $x$ and $\lvs{v}$ by
\begin{align}
  d_{-}(x,v)^2 = \sum_{j=1}^d\left\{\begin{aligned}
  &(x(j) - v_{-}(j))^2 & \textrm{if } x(j) \le v_{-}(j)\\
  &(x(j) - v_{+}(j))^2 & \textrm{if } x(j) \ge v_{+}(j) \\
  &0 & \textrm{otherwise}
\end{aligned}\right. \label{eq:min_dist}
\end{align}
We approximate the squared maximum distance by
\begin{align}
d_+(x,v)^2 = \sum_{j=1}^d\max\left\{(x(j) - v_-(j))^2, (x(j)-v_+(j))^2\right\}. \label{eq:max_dist}
\end{align}
It is easy to verify that these provide lower and upper bounds on the
squared minimum and maximum distance between $x$ and $\lvs{v}$. See
Figure \ref{fig:bounding-box} for a visual representation.

For the nearest neighbor search involved in inserting a point $x$ into
$\mathcal{T}$, we use the minimum distance approximation $d_{-}(x,v)$
as a heuristic in $A^\star$ search. Our implementation maintains a
frontier of unexplored internal nodes of $\mathcal{T}$ and repeatedly
expands the node $v$ with minimal $d_{-}(x,v)$ by adding its children
to the frontier. Since the approximation $d_-$ is always a lower bound
and it is exact for leaves, it is easy to see that the first leaf
visited by the search is the nearest neighbor of $x$.  This is similar
to the nearest neighbor search in the Balltree data
structure~\cite{omohundro1989five}.

For masking rotations, we use a more stringent check based on
Equation~\ref{eq:masking_def}. Specifically, we perform a masking rotation
on node $v$ with sibling $v'$ and aunt $a$ if:

\begin{align}
d_{-}(v,v') > d_{+}(v,a)
\label{eq:bbmaskcheck}
\end{align}

Note that $d_{-}(v,v')$ is a slight abuse of notation (and so is $d_{+}(v,a)$)
because both $v$ and $v'$ are bounding boxes. To compute, $d_{-}(v,v')$,
for each dimension in $v$'s bounding box, use either the minimum or maximum
along that dimension to minimize the sum of coordinate-wise distances (Equation \ref{eq:min_dist}).
A similar procedure can be performed to compute $d_{+}(v,a)$ between two bounding boxes.

Note that if Equation \ref{eq:bbmaskcheck} holds then $v$ is masked and a rotation with respect to $v$ will unmask $v$ because, for all $x \in \lvs{v}$:
\begin{align*}
\max_{y \in \lvs{v'}} || x-y || &\geq \min_{y \in \lvs{v'}} || x-y || & \geq d_{-}(v,v') \geq d_{+}(v,a) & \geq \max_{z \in \lvs{a}} || x-z || & \geq 	\min_{z \in \lvs{a}} || x-z ||.
\end{align*}

In words, we know that rotation will unmask $v$ because the upper bound
on the distance from a point in $v$ to a point in $a$ is smaller than the lower
bound on the distance from a point in $v$ to a point in $v'$.

\begin{figure}[t]
\begin{center}
\centerline{\includegraphics[width=0.5\columnwidth]{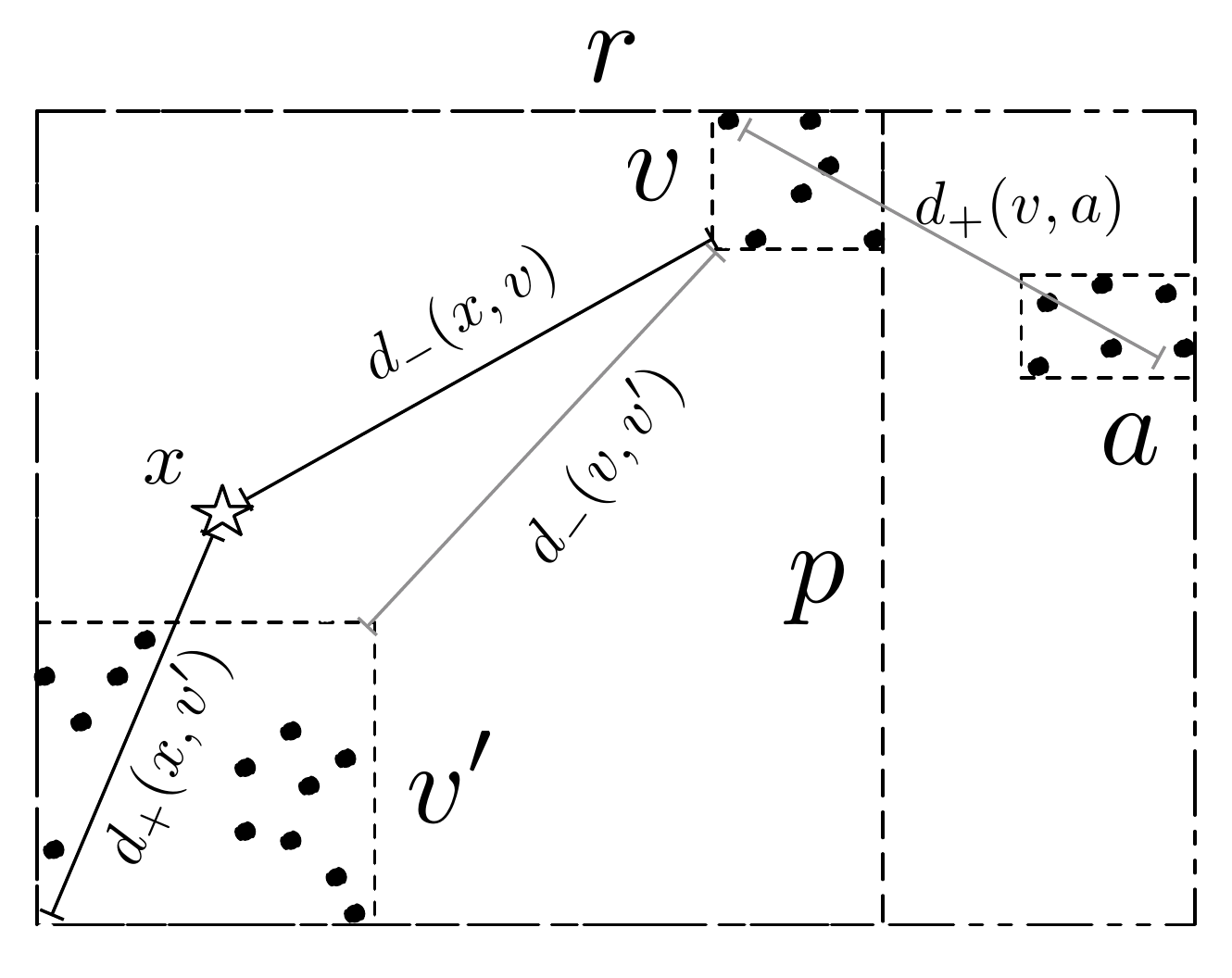}}
\caption{A subtree $r$ with two children $p$ and $a$; $p$ is the
  parent of $v$ and $v'$ (all nodes indicated by boxes with dashed
  outlines).  A point $x$ is inserted and descends to $v'$ because
  $d_{+}(x, v') < d_{-}(x, v)$ (black lines).  The node $v$ is masked
  because $d_{+}(v, a) < d_{-}(v, v')$ (gray lines).}
\label{fig:bounding-box}
\end{center}
\end{figure}

	\section{\alg}
Our algorithm is called \alg, for Purity Enhancing Rotations for
Cluster Hierarchies.

\alg can be run in several modes. All modes use bounding box
approximations, masking-, and balance-based rotations. The standard
implementation, simply denoted \alg, only includes these features (See
Algorithm~\ref{alg:perch}).  \algc additionally runs in collapsed mode
and requires the parameter $L$ for the maximum number of leaves. \algb
replaces the A$^\star$ search with a breadth-first beam search to
expedite the nearest neighbor lookup. This mode also takes the beam
width as a parameter. Finally \algbc uses both the beam search and
runs in collapsed mode.

We implement \alg to mildly exploit parallelism.  When finding nearest
neighbors via beam or A$^\star$ search, we use multiple threads to
expand the frontier in parallel.
	\section{Experiments}
\label{sec:exp}
We compare \alg to several state-of-the-art clustering algorithms on
large-scale real-world datasets. Since few benchmark clustering datasets
exhibit a large number of clusters, we conduct experiments with large-scale
classification datasets that naturally contain many classes and
simply omit the class labels. Note that other work in large-scale
clustering recognizes this deficiency with the standard clustering
benchmarks~\cite{betancourt2016flexible, ahmed2012fastex}. Although
smaller datasets are not our primary focus, we also measure the
performance of \alg on standard clustering benchmarks.

\subsection{Experimental Details}
\label{subsec:setup}
\paragraph{Algorithms.} We compare the following 10 clustering algorithms:
\begin{packed_item}
\item \alg\ - Our algorithm, we use several of the various
  modes depending on the size of the problem. For beam search, we use
  a default width of 5 (per thread, 24 threads). We only run in collapsed mode for
  ILSVRC12 and ImageNet (100K) where $L=50000$.
\item BIRCH - A top-down hierarchical clustering method where
  each internal node represents its leaves via mean and variance
  statistics. Points are inserted greedily using the node statistics,
  and importantly no rotations are performed.
\item HAC - Hierarchical agglomerative clustering (various
  linkages). The algorithm repeatedly merges the two subtrees that are
  closest according to some measure, to form a larger subtree.
\item Mini-batch HAC (MB-HAC) - HAC (various linkages) made to
  run online with mini-batching. The algorithm maintains a buffer of
  subtrees and must merge two subtrees in the buffer before observing
  the next point. We use buffers of size 2K and 5K and centroid (cent.) and
  complete (com.) linakges.
\item \km\ - LLoyd's algorithm, which alternates between assigning
  points to centers and recomputing centers based on the
  assignment. We use the \km++ initialization~\cite{arthur2007k}.
\item Stream \km++ (SKM++) \cite{ackermann2012streamkm++} - A streaming algorithm that computes a
  representative subsample of the points (a coreset) in one pass, runs
  \km++ on the subsample, and in a second pass assigns points
  greedily to the nearest cluster center.
\item Mini-batch \km\ (MB-KM) \cite{sculley2010web} - An algorithm that optimizes the \km\
  objective function via mini-batch stochastic gradient descent. The
  implementation we use includes many heuristics, including several
  initial passes through the data to initialize centers via
  \km++, random restarts to avoid local optima, and early
  stopping \cite{scikit-learn}. In a robustness experiments, we also study a
  fully online version of this algorithm, where centers are
  initialized randomly and without early stopping or random
  reassignments.
\item BICO \cite{fichtenberger2013bico} - An algorithm for optimizing
  the \km\ objective that creates coresets via a streaming approach
  using a BIRCH-like data structure. \km++ is run on the coresets and
  then points are assigned to the inferred centers.
\item DBSCAN \cite{ester1996density} - A density based method. The algorithm computes
  nearest-neighbor balls around each point, merges overlapping balls,
  and builds a clustering from the resulting connected components.
\item Hierarchical \km\ (HKMeans) - top-down, divisive, hierarchical
  clustering. At each level of the hierarchy, the remaining points are
  split into two groups using \km.
\end{packed_item}
These algorithms represent hierarchical and flat clustering approaches
from a variety of algorithm families including coreset, stochastic
gradient, tree-based, and density-based.  Most of the algorithms
operate online and we find that most baselines exploit parallelism
to various degrees, as we do with \alg.  We also compare to the less
scalable batch algorithms (HAC and \km) when possible.

\paragraph{Datasets.} We evaluate the algorithms on 9 datasets (See
Table~\ref{tbl:datasets} for relevant statistics):

\begin{packed_item}
\item ALOI - (Amsterdam Library of Object Images \cite{geusebroek2005amsterdam}) contains
  images and is used as an extreme classification
  benchmark~\cite{choromanska2015logarithmic}.
\item Speaker - The NIST I-Vector Machine Learning Challenge
  speaker detection dataset~\cite{dehak2011front}. Our
  goal is to cluster recordings from the same speaker together. We
  cluster the whitened development set (scripts provided by The
  Challenge).
\item ILSVRC12 - The ImageNet Large Scale Visual
  Recognition Challenge 2012
  \cite{ILSVRC15}. The class
  labels are used to produce a ground truth clustering. We generate
  representations of each image from the last layer of
  Inception\cite{szegedy2016rethinking}.
\item ILSVRC12 (50K) - a 50K subset of ILSVRC12.
\item ImageNet (100K) - a 100K subset of the ImageNet database. Image
  classes are sampled proportional to their frequency in the database.
\item Covertype - forest cover types (benchmark).
\item Glass - different types of glass (benchmark).
\item Spambase - email data of spam and not-spam
  (benchmark).
\item Digits - a subset of a handwritten digits dataset (benchmark).
\end{packed_item}
The Covertype, Glass, Spambase and Digits datasets are provided by the
UCI Machine Learning Repository~\cite{Lichman:2013}.

\begin{table}
\begin{tabular}{|c | c c c c|}
\hline
           & \bf Name        & \bf Clusters & \bf Points & \bf Dim.\\
\hline
           & ImageNet (100K) & 17K          & 100K       & 2048         \\
           & Speaker         & 4958         & 36,572     & 6388         \\
Large      & ILSVRC12        & 1000         & 1.3M       & 2048         \\
Data sets  & ALOI            & 1000         & 108K       & 128          \\
           & ILSVRC12 (50K)  & 1000         & 50K        & 2048         \\
           & CoverType       & 7            & 581K       & 54           \\
\hline
Small      & Digits          & 10           & 200        & 64           \\
Benchmarks & Glass           & 6            & 214        & 10           \\
           & Spambase        & 2            & 4601       & 57           \\
\hline
\end{tabular}
\caption{Dataset Statistics.}
\label{tbl:datasets}
\end{table}

\paragraph{Validation and Tuning.} As the data arrival order
impacts the performance of the online algorithms, we run each online
algorithm on random permutations of each dataset. We tune
hyperparameters for all methods and report the performance of the
hyperparameter with best average performance over 5 repetitions for the
larger datasets (ILSVRC12 and Imagenet (100K)) and 10 repetitions of the
other datasets.

\subsection{Hierarchical Clustering Evaluation}
\label{subsec:exphier}

\begin{figure*}[t]
  \captionsetup[subfigure]{justification=centering}
\begin{subfigure}[b]{1.0\textwidth}
	\footnotesize
\begin{tabular}{|c|c|c|c|c|c|c|}
	\hline
	\textbf{Method}               & \textbf{CovType} & \textbf{ILSVRC12 (50k)} & \textbf{ALOI}     & \textbf{ILSVRC 12}  & \textbf{Speaker}   & \textbf{ImageNet (100k)} \\

	\hline
	\textbf{\alg}                 & \bf 0.45 $\pm$ 0.004   & \bf 0.53 $\pm$ 0.003  & \bf 0.44 $\pm$ 0.004   & ---               & \bf 0.37 $\pm$ 0.002   & \bf  0.07 $\pm$ 0.00         \\
	\textbf{\algbc}               & 0.45 $\pm$ 0.004   & 0.36 $\pm$ 0.005         & 0.37 $\pm$ 0.008   & \bf 0.21 $\pm$ 0.017   & 0.09 $\pm$ 0.001   &   0.03 $\pm$ 0.00         \\
	\textbf{BIRCH (online)}       & 0.44 $\pm$ 0.002   & 0.09 $\pm$ 0.006         & 0.21 $\pm$ 0.004   & 0.11 $\pm$ 0.006  & 0.02 $\pm$ 0.002   &   0.02 $\pm$ 0.00       \\
	\textbf{MB-HAC-Com.}      &  ---               & 0.43 $\pm$ 0.005         & 0.15 $\pm$ 0.003   & ---                & 0.01 $\pm$ 0.002   &  ---                       \\
	\textbf{MB-HAC-Cent.}      & 0.44 $\pm$ 0.005   & 0.02 $\pm$ 0.000         & 0.30 $\pm$ 0.002   & ---                & ---                &  ---                      \\
	\textbf{HKMmeans}             & 0.44 $\pm$ 0.001   &  0.12 $\pm$ 0.002        & \bf 0.44 $\pm$ 0.001   & 0.11 $\pm$ 0.003   & 0.12 $\pm$ 0.002   & 0.02 $\pm$ 0.00     \\
    \textbf{BIRCH (rebuild)     }     & 0.44 $\pm$ 0.002    &  0.26 $\pm$ 0.003        & 0.32 $\pm$ 0.002   & ---                & 0.22 $\pm$ 0.006   &   0.03 $\pm$ 0.00 \\
	\hline
\end{tabular}
\caption{Dendrogram Purity for Hierarchical Clustering.}
\label{tbl:dp_results}
\end{subfigure}
\\
  \vspace{5mm}
  \begin{subfigure}[h]{1.0\textwidth}
  \centering
  \footnotesize
\begin{tabular}{|c|c|c|c|c|c|c|}
	\hline
	\textbf{Method}          & \textbf{CoverType}    & \textbf{ILSVRC 12 (50k)} & \textbf{ALOI}       & \textbf{ILSVRC 12}    & \textbf{Speaker}      & \textbf{ImageNet (100K)}   \\
	\hline
	\textbf{\alg}            & 22.96 $\pm$ 0.7       & \bf 54.30 $\pm$ 0.3      & \bf 44.21 $\pm$ 0.2 &  ---                  & \bf 31.80 $\pm$ 0.1   & \bf 6.178 $\pm$ 0.0        \\
	\textbf{\algbc}          & 22.97 $\pm$ 0.8       & 37.98 $\pm$ 0.5          &  37.48 $\pm$ 0.7    & 25.75 $\pm$ 1.7       & 1.05 $\pm$ 0.1        &  4.144 $\pm$ 0.04          \\
	\textbf{SKM++}           & 23.80 $\pm$ 0.4       & 28.46 $\pm$ 2.2          & 37.53 $\pm$ 1.0     &  ---                  & ---                   &     ---                    \\
	\textbf{BICO} & \bf 24.53 $\pm$ 0.4 & 45.18 $\pm$ 1.0  & 32.984 $\pm$ 3.4 & --- & --- & --- \\
	\textbf{MB-KM}           &  24.27 $\pm$ 0.6   & 51.73 $\pm$ 1.8          & 40.84 $\pm$ 0.5     & \bf 56.17 $\pm$ 0.4   & 1.73 $\pm$ 0.141      &  5.642 $\pm$ 0.00          \\
	\textbf{DBSCAN}          &  ---                  & 16.95                    & ---                 & ---                  & 22.63                  &  ---                       \\
	\hline
\end{tabular}
\caption{Pairwise F1 for Flat Clustering.}
\label{tbl:pw_results}
\end{subfigure}
\\
  \vspace{5mm}
  \begin{subfigure}[h]{0.49\textwidth}
    \centering
    \begin{tabular}{|c| c c |}
\hline
\bf       & Round.  & Sort. \\
\hline
\alg      & 44.77   &  35.28   \\
o-MB-KM   & 41.09   &  19.40   \\
SKM++     & 43.33   &  46.67   \\
\hline
\end{tabular}
\caption{Pairwise F1 on adversarial input orders for \\ALOI.}
\label{tab:robust-f1}

  \end{subfigure}
  \begin{subfigure}[h]{0.49\textwidth}
    \centering
    \begin{tabular}{|c| c c |}
\hline
\bf         & Round. & Sort. \\
\hline
\alg        & 0.446  &  0.351 \\
MB-HAC (5K) & 0.299  &  0.464 \\
MB-HAC (2K) & 0.171  &  0.451 \\
\hline
\end{tabular}
\caption{Dendrogram Purity on adversarial input orders
  for ALOI.}
\label{tab:robust-dp}

  \end{subfigure}
  \caption{\alg is the top performing algorithm in terms of dendrogram
    purity competitive in F1. \alg is nearly twice as fast as MB-KM on
    ImageNet (100K) (Section \ref{subsec:expbirch}). Dashes
    represent algorithms that could not be run or produced low results.}
\end{figure*}

\alg, and many of the other baselines build cluster trees, and as such
they can be evaluated using dendrogram purity. In
Table~\ref{tbl:dp_results}, we report the dendrogram purity, averaged
over random shufflings of each dataset, for the 6 large datasets,
and for the scalable hierarchical clustering algorithms (BIRCH, MB-HAC
variants, and HKMeans). The top 5 rows in the table correspond to online
algorithms, while the bottom 2 are classical batch algorithms. The
comparison demonstrates the quality of the online algorithms, as well
as the degree of scalability of the offline methods. Unsurprisingly,
we were not able to run some of the batch algorithms on the larger
datasets; these algorithms do not appear in the table.

\alg consistently produces trees with highest dendrogram purity
amongst all online methods. \algb, which uses an approximate
nearest-neighbor search, is worse, but still consistently better than
the baselines. For the datasets with 50K examples or fewer (ILSVRC12
50K and Speaker) we are able to run the less scalable algorithms. We
find that HAC with average-linkage achieves a dendrogram purity of
0.55 on the Speaker dataset and only outperforms \alg by 0.01
dendrogram purity on ILSVRC12 (50K). HAC with complete-linkage only
slightly outperforms \alg on Speaker with a dendrogram purity of 0.40.

We hypothesize that the success of \alg in these experiments can
largely be attributed to the rotation operations and the bounding box
approximations. In particular, masking-based rotations help alleviate
difficulties with online processing, by allowing the algorithm to make
corrections caused by difficult arrival orders. Simultaneously, the
bounding box approximation is a more effective search heuristic for
nearest neighbors in comparison with using cluster means and variances
as in BIRCH. We observe that MB-HAC with centroid-linkage
performs poorly on some datasets, which is likely due to the fact that
a cluster's mean can be an uninformative representation of its member
points, especially in the online setting.

\subsection{Flat Clustering Evaluation}
\label{subsec:expflat}
We also compare \alg to the flat-clustering algorithms described
above. Here we evaluate a $K$-way clustering via the Pairwise F1
score~\cite{manning2008introduction}, which given ground truth
clustering $\cstar$ and estimate $\chat$, is the harmonic mean of the
precision and recall between $\pstar$ and $\phat$ (which are pairs of
points that are clustered together in $\cstar,\chat$ respectively). In
this section we compare $\alg$ to MB-KM, SKM++, and DBSCAN. On the
smaller datasets, we also compare to the offline Lloyd's algorithm for
$k$-means clustering (with \km++ initialization) and HAC where the
tree construction terminates once $K$-subtrees remain. All algorithms,
including \alg, use the true number of clusters as an input parameter
(except DBSCAN, which does not take the number of clusters as input).

Since \alg produces a cluster tree, we extract a tree-consistent
partition using the following greedy heuristic: while the tree
$\mathcal{T}$ does not have $K$ leaves, collapse the internal node
with the smallest \emph{cost}, where $\textrm{cost}(v)$ is the maximum
length diagonal of $v$'s bounding box multiplied by
$|\lvs{v}|$. $\textrm{cost}(v)$ can be thought of as an upper bound on
the $K$-means cost of $v$.  We also experimented with other
heuristics, but found this one to be quite effective.

The results of the experiment are in Table~\ref{tbl:pw_results}.  As
in the case of dendrogram purity, \alg competes with or outperforms
all the scalable algorithms on all datasets, even though we use a
na\"{i}ve heuristic to identify the final flat clustering from the
tree.  \km, which could not be run on the larger datasets, is able
to achieve a clustering with 60.4 F1 on ILSVRC (50K) and 32.19 F1 on
Speaker; HAC with average-linkage and HAC with complete-linkage
achieve scores of 40.26 and 44.3 F1 respectively on Speaker. This is unsurprising
because in each iteration of both \km\ and HAC, the algorithm has
access to the entire dataset. We were able to
run \km\ on the Covertype dataset and achieve a score of 24.42
F1. We observe that \km\ is able to converge quickly, which is
likely due to the \km++ initialization and the small number of
true clusters (7) in the dataset. Since \alg performs well on each of
the datasets in terms of dendrogram purity, it could be possible to
extract better flat clusterings from the trees it builds with a
different pruning heuristic.

We note that when the number of clusters is large, DBSCAN does not
perform well, because it (and many other density based algorithms)
assumes that some of the points are \emph{noise} and keeps them
isolated in the final clustering. This outlier detection step is
particularly problematic when the number of clusters is large, because
there are many small clusters that are confused for outliers.

\subsection{Speed and Accuracy}
\label{subsec:expbirch}
We compare the best performing methods above in terms of running time
and accuracy. We focus \algbc, BIRCH, and HKMeans. Each of these
algorithms has various parameter settings that typically govern a
tradeoff between accuracy and running time, and in our experiment, we
vary these parameters to better understand this tradeoff. The specific
parameters are:
\begin{packed_item}
\item \textbf{\algbc}: beam-width, collapse threshold,
\item \textbf{BIRCH}: branching factor,
\item \textbf{HKMeans}: number of iterations per level.
\end{packed_item}

\begin{figure}[t]
\begin{center}
\centerline{\includegraphics[width=0.75\columnwidth]{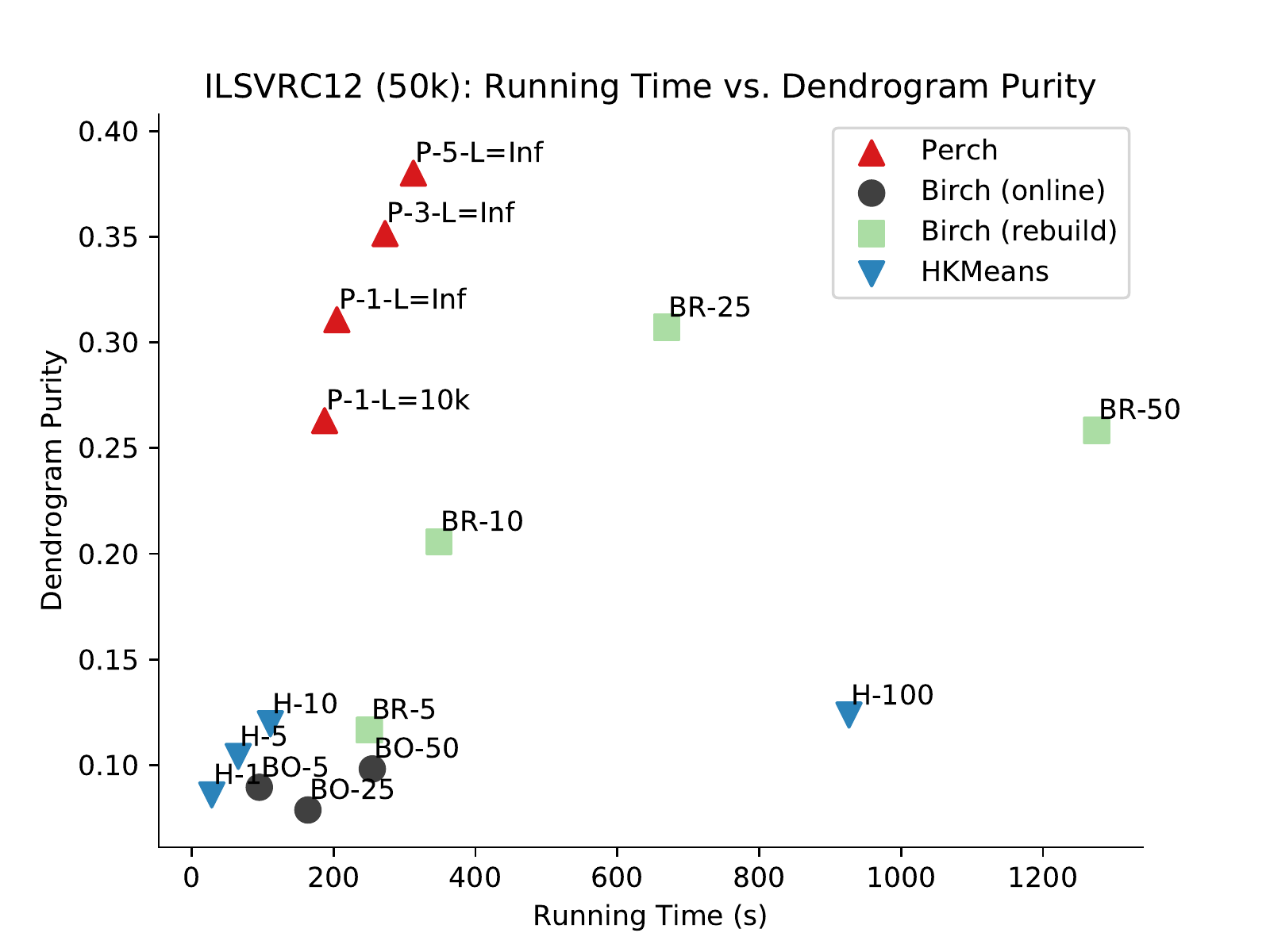}}
\caption{Speed vs. dendrogram purity on the ILSVRC12 50k dataset. \alg (P)
  is always more accurate than BIRCH (B) even when tuned so the
  running times are comparable. HKmeans (H) is also depicted for
  comparison against a fast, batch, tree building algorithm.}
\label{fig:speed-vs-accuracy}
\end{center}
\vskip -0.4in
\end{figure}

In Figure~\ref{fig:speed-vs-accuracy}, we plot the dendrogram purity
as a function of running time for the algorithms as we vary the
relevant parameter. We use the ILSVRC12 (50K) dataset and run all
algorithms on the same machine (28 cores with 2.40GHz
processor).

The results show that \algbc achieves a better tradeoff between
dendrogram purity and running time than the other methods. Except for
in the uninteresting low purity regime, our algorithm achieves the
best dendrogram purity for a given running time. HKMeans with few
iterations can be faster, but it produces poor clusterings, and with
more iterations the running time scales quite poorly. BIRCH with rebuilding
performs well, but \algbc performs better in less time.

Our experiments also reveal that MB-KM is quite effective, both in
terms of accuracy and speed. MB-KM is the fastest algorithm when the
number of clusters is small while \alg is fastest when the number of
clusters is very large. When the number of clusters is fairly small,
MB-KM achieves the best speed-accuracy tradeoff.  However, since the
gradient computation in MB-KM are $O(K)$, the algorithm scales poorly
when the number of clusters is large, while our algorithm, provided
short trees are produced, has no dependence on $K$. For example, for
ImageNet (100K) (the dataset with the largest number of clusters),
MB-KM runs in $\sim$8007.93s (averaged over 5 runs) while \alg runs
nearly twice as fast in 4364.37s.  Faster still (although slightly
less accurate) is \algbc which clusters the dataset in only 690.16s.

In Figure~\ref{fig:impure-by-k}, we confirm empirically that insertion
times scale with the size of the dataset. In the figure, we plot the
insertion time and the maximum tree depth as a function of the number
of data points on ILSVRC12 (50K). We also plot the same statistics for
a variant of \alg that does not invoke balance-rotations to understand
how tree structure affects performance.

\begin{figure}[ht]
\vskip 0.2in
\begin{center}
\centerline{\includegraphics[width=0.75\columnwidth]{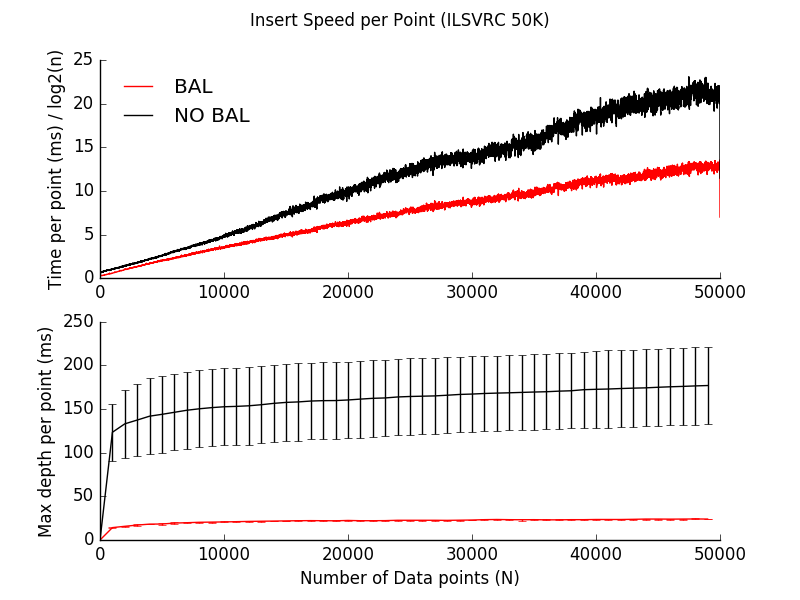}}
\caption{Insert speed divided by $\log_2(n)$ and max-depth as a
  function of the number of points inserted. \alg is the red (bottom)
  line and \alg without balance-rotations is black (top).  Balancing
  helps to make insertion fast by keeping the tree short.}
\label{fig:impure-by-k}
\end{center}
\end{figure}

The top plot shows that \alg's speed of insertion grows modestly with
the number of points, even though the dimensionality of the data is
high (which can make bounding box approximations worse).  Both the top
and bottom plots show that \alg's balance rotations have a significant
impact on its efficiency; in particular, the bottom plot suggests that
the efficiency can be attributed to shallower trees. Recall that our
analysis does not bound the depth of the tree or the running time of
the nearest neighbor search. Thus in the worst case the algorithm may
explore all leaves of the tree and have total running time scaling with
$O(N^2d)$ where there are $N$ data points. Empirically we observe that
this is not the case.

\subsection{Robustness}
We are also interested in understanding the performance of \alg and
other online methods as a function of the arrival order. \alg is
designed to be robust to adversarial arrival orders (at least under
separability assumptions), and this section empirically validates this
property. On the other hand, many online or streaming implementations
of batch clustering algorithms can make severe irrecoverable mistakes
on adversarial data sequences.  Our (non-greedy) masking-rotations
explicitly and deliberately alleviate such behavior.

To evaluate robustness, we run the algorithms on two adversarial
orderings of the ALOI dataset:
\begin{itemize}
  \item \textbf{Sorted} - the points are ordered by class (i.e., all
     $x \in C_k^\star \subset \cstar$ are followed by all $x \in
    C_{i+1}^\star \subset \cstar$, etc.)
  \item \textbf{Round Robin} - the $i^{th}$ point to arrive is a
    member of $C_{i\bmod K}^\star$ where $K$ is the true number of
    clusters.
\end{itemize}
The results in Tables~\ref{tbl:dp_results} and~\ref{tbl:pw_results}
use a more benign random order.

Tables~\ref{tab:robust-dp} and \ref{tab:robust-f1} contain the results
of the robustness experiment.  \alg performs best on
round-robin. While \alg's dendrogram purity decreases on the sorted
order, the degradation is much less than the other methods. Online
versions of agglomerative methods perform quite poorly on round-robin
but much better on sorted orders. This is somewhat surprising, since
the methods use a buffer size that is substantially larger than the
number of clusters, so in both orders there is always a good merge to
perform. For flat clusterings, we compare \alg with an online version
of mini-batch \km\ (o-MB-KM).  This algorithm is restricted to pick
clusters from the first mini-batch of points and not allowed to drop
or restart centers.  The o-MB-KM algorithm has significantly worse
performance on the sorted order, since it cannot recover from
separating multiple points from the same ground truth cluster. It is
important to note the difference between this algorithm and the MB-KM
algorithm used elsewhere in these experiments, which is robust to the
input order. SKM++ improves since it uses a non-greedy method for the
coreset construction. However, SKM++ is not technically an online
algorithm, since it makes two passes over the dataset.

\subsection{Small-scale Benchmarks}
\label{subsec:expsmall}
Finally, we evaluate our algorithm on the standard (small-scale)
clustering benchmarks.  While \alg is designed to be accurate and
efficient for large datasets with many clusters, these results help
us understand the price for scalability, and provide a more exhaustive
experimental evaluation. The results appear in
Table~\ref{tab:small-scale} and show that \alg is competitive with
other batch and online algorithms (in terms of dendrogram purity),
despite only examining each point once, and being optimized for large-scale problems.

\begin{figure}[h]
  \vskip 0.2in
  \centering
  \begin{tabular}{|c| c c c |}
    \hline
    \bf     & Glass             & Digits            & Spambase \\
    \hline
    \alg    & 0.474 $\pm$ 0.017 & 0.614 $\pm$ 0.033 & 0.611 $\pm$ 0.0131 \\
    BIRCH   & 0.429 $\pm$ 0.013 & 0.544 $\pm$ 0.054 & 0.595 $\pm$ 0.013 \\
    \hline
    HKMeans & 0.508 $\pm$ 0.008 & 0.586 $\pm$ 0.029 & 0.626 $\pm$ 0.000  \\
    HAC-C   & 0.47              & 0.594             & 0.628 \\
    \hline
  \end{tabular}
  \caption{Dendrogram purity on small-scale benchmarks.}
  \label{tab:small-scale}
\end{figure}

	\section{Related Work}
The literature on clustering is too vast for an in-depth treatment
here, so we focus on the most related methods. These can be
compartmentalized into hierarchical methods, online optimization of
various clustering cost functions, and approaches based on
coresets. We also briefly discuss related ideas in supervised learning
and nearest neighbor search.

Standard hierarchical methods like single linkage are often the
methods of choice for small datasets, but, with running times scaling
quadratically with sample size, they do not scale to larger
problems. As such, several online hierarchical clustering algorithms
have been developed. A natural adaptation of these agglomerative
methods is the mini-batch version that \alg outperforms in our
empirical evaluation. BIRCH~\cite{zhang1996birch} and its
extensions~\cite{fichtenberger2013bico} comprise state of the art online
hierarchical methods that, like \alg, incrementally insert points into
a cluster tree data structure. However, unlike \alg, these methods
parameterize internal nodes with means and variances as opposed to
bounding boxes, and they do not implement rotations, which our
empirical and theoretical results justify. On the other hand,
Widyantoro et al.~\cite{widyantoro2002incremental} instead use a
bottom-up approach.

A more thriving line of work focuses on incremental optimization of
clustering cost functions. A natural approach is to use stochastic
gradient methods to optimize the \km\
cost~\cite{bottou1995convergence,sculley2010web}. Liberty et
al.~\cite{liberty2016algorithm} design an alternative online \km\
algorithm that when processing a point, opts to start a new cluster if
the point is far from the current centers. This idea draws inspiration
from the algorithm of Charikar et al.~\cite{charikar1997incremental}
for the online $k$-center problem, which also adjusts the current
centers when a new point is far away. Closely related to these
approaches are several online methods for inference in a probabilistic
model for clustering, such as stochastic variational methods for
Gaussian Mixture Models~\cite{hoffman2013stochastic}.  As our
experiments demonstrate, this family of algorithms is quite effective
when the number of clusters is small, but for problems with many
clusters, these methods typically do not scale.

Lastly, a number of clustering methods use coresets, which are small
but representative data subsets, for scalability. For example, the
StreamKM++ algorithm of Ackermann et al.~\cite{ackermann2012streamkm++}
and the BICO algorithm of Fichtenberger et al.
~\cite{fichtenberger2013bico}, run the \km++ algorithm on a
coreset extracted from a large data stream. Other methods with strong
guarantees also exist~\cite{badoiu2002approximate}, but typically
coreset construction is expensive, and as above, these methods do not
scale to the extreme clustering setting where $K$ is large.

While not explicitly targeting the clustering task, tree-based methods
for nearest neighbor search and extreme multiclass classification
inspired the architecture of \alg. In nearest neighbor search, the
cover tree structure~\cite{beygelzimer2006cover} represents points
with a hierarchy while supporting online insertion and deletion, but
it does not perform rotations or other adjustments that improve
clustering performance. Tree-based methods for extreme classification
can scale to a large number of classes, but a number of algorithmic
improvements are possible with access to labeled data. For example,
the recall tree~\cite{daume2016logarithmic} allows for a class to be
associated with many leaves in the tree, which does not seem possible
without supervision.

	\section{Conclusion}
In this paper, we present a new algorithm, called \alg, for
large-scale clustering. The algorithm constructs a cluster tree in an
online fashion and uses rotations to correct mistakes and encourage a
shallow tree. We prove that under a separability assumption, the
algorithm is guaranteed to recover a ground truth clustering and we
conduct an exhaustive empirical evaluation. Our experimental results
demonstrate that \alg outperforms existing baselines, both in terms of
clustering quality and speed. We believe these experiments
convincingly demonstrate the utility of \alg.

Our implementation of \alg used in these experiments is available at: \url{http://github.com/iesl/xcluster}.

	\subsection*{Acknowledgments}
We thank Ackermann et. al~\cite{ackermann2012streamkm++} for providing
us an implementation of StreamKM++ and BICO, and Luke Vilnis for many
helpful discussions.

This work was supported in part by the Center for Intelligent
Information Retrieval, in part by DARPA under agreement number
FA8750-13-2-0020, in part by Amazon Alexa Science, in part by the National 
Science Foundation Graduate
Research Fellowship under Grant No. NSF-1451512 and in part by the
Amazon Web Services (AWS) Cloud Credits for Research program. The work
reported here was performed in part using high performance computing
equipment obtained under a grant from the Collaborative R\&D Fund
managed by the Massachusetts Technology Collaborative. Any opinions,
findings and conclusions or recommendations expressed in this material
are those of the authors and do not necessarily reflect those of the
sponsor.

	\bibliographystyle{ACM-Reference-Format}
	\bibliography{clustering}


\begin{thebibliography}{00}


\ifx \showCODEN    \undefined \def \showCODEN     #1{\unskip}     \fi
\ifx \showDOI      \undefined \def \showDOI       #1{{\tt DOI:}\penalty0{#1}\ }
  \fi
\ifx \showISBNx    \undefined \def \showISBNx     #1{\unskip}     \fi
\ifx \showISBNxiii \undefined \def \showISBNxiii  #1{\unskip}     \fi
\ifx \showISSN     \undefined \def \showISSN      #1{\unskip}     \fi
\ifx \showLCCN     \undefined \def \showLCCN      #1{\unskip}     \fi
\ifx \shownote     \undefined \def \shownote      #1{#1}          \fi
\ifx \showarticletitle \undefined \def \showarticletitle #1{#1}   \fi
\ifx \showURL      \undefined \def \showURL       #1{#1}          \fi
\providecommand\bibfield[2]{#2}
\providecommand\bibinfo[2]{#2}
\providecommand\natexlab[1]{#1}
\providecommand\showeprint[2][]{arXiv:#2}

\bibitem[\protect\citeauthoryear{Ackermann, M{\"a}rtens, Raupach, Swierkot,
  Lammersen, and Sohler}{Ackermann et~al\mbox{.}}{2012}]%
        {ackermann2012streamkm++}
\bibfield{author}{\bibinfo{person}{Marcel~R Ackermann}, \bibinfo{person}{Marcus
  M{\"a}rtens}, \bibinfo{person}{Christoph Raupach}, \bibinfo{person}{Kamil
  Swierkot}, \bibinfo{person}{Christiane Lammersen}, {and}
  \bibinfo{person}{Christian Sohler}.} \bibinfo{year}{2012}\natexlab{}.
\newblock \showarticletitle{StreamKM++: A clustering algorithm for data
  streams}.
\newblock \bibinfo{journal}{{\em Journal of Experimental Algorithmics (JEA)\/}}
  (\bibinfo{year}{2012}).
\newblock


\bibitem[\protect\citeauthoryear{Aggarwal, Han, Wang, and Yu}{Aggarwal
  et~al\mbox{.}}{2003}]%
        {aggarwal2003framework}
\bibfield{author}{\bibinfo{person}{Charu~C Aggarwal}, \bibinfo{person}{Jiawei
  Han}, \bibinfo{person}{Jianyong Wang}, {and} \bibinfo{person}{Philip~S Yu}.}
  \bibinfo{year}{2003}\natexlab{}.
\newblock \showarticletitle{A framework for clustering evolving data streams}.
  In \bibinfo{booktitle}{{\em International Conference on Very Large Databases
  (VLDB)}}.
\newblock


\bibitem[\protect\citeauthoryear{Ahmed, Ravi, Smola, and Narayanamurthy}{Ahmed
  et~al\mbox{.}}{2012}]%
        {ahmed2012fastex}
\bibfield{author}{\bibinfo{person}{Amr Ahmed}, \bibinfo{person}{Sujith Ravi},
  \bibinfo{person}{Alex~J Smola}, {and} \bibinfo{person}{Shravan~M
  Narayanamurthy}.} \bibinfo{year}{2012}\natexlab{}.
\newblock \showarticletitle{Fastex: Hash clustering with exponential families}.
  In \bibinfo{booktitle}{{\em Advances in Neural Information Processing Systems
  (NIPS)}}.
\newblock


\bibitem[\protect\citeauthoryear{Arthur and Vassilvitskii}{Arthur and
  Vassilvitskii}{2007}]%
        {arthur2007k}
\bibfield{author}{\bibinfo{person}{David Arthur} {and} \bibinfo{person}{Sergei
  Vassilvitskii}.} \bibinfo{year}{2007}\natexlab{}.
\newblock \showarticletitle{k-means++: The advantages of careful seeding}. In
  \bibinfo{booktitle}{{\em Symposium on Discrete Algorithms (SODA)}}.
\newblock


\bibitem[\protect\citeauthoryear{B{\=a}doiu, Har-Peled, and Indyk}{B{\=a}doiu
  et~al\mbox{.}}{2002}]%
        {badoiu2002approximate}
\bibfield{author}{\bibinfo{person}{Mihai B{\=a}doiu}, \bibinfo{person}{Sariel
  Har-Peled}, {and} \bibinfo{person}{Piotr Indyk}.}
  \bibinfo{year}{2002}\natexlab{}.
\newblock \showarticletitle{Approximate clustering via core-sets}. In
  \bibinfo{booktitle}{{\em Symposium on Theory of Computing (STOC)}}.
\newblock


\bibitem[\protect\citeauthoryear{Betancourt, Zanella, Miller, Wallach, Zaidi,
  and Steorts}{Betancourt et~al\mbox{.}}{2016}]%
        {betancourt2016flexible}
\bibfield{author}{\bibinfo{person}{Brenda Betancourt}, \bibinfo{person}{Giacomo
  Zanella}, \bibinfo{person}{Jeffrey~W Miller}, \bibinfo{person}{Hanna
  Wallach}, \bibinfo{person}{Abbas Zaidi}, {and} \bibinfo{person}{Rebecca~C
  Steorts}.} \bibinfo{year}{2016}\natexlab{}.
\newblock \showarticletitle{Flexible Models for Microclustering with
  Application to Entity Resolution}. In \bibinfo{booktitle}{{\em Advances in
  Neural Information Processing Systems (NIPS)}}.
\newblock


\bibitem[\protect\citeauthoryear{Beygelzimer, Kakade, and Langford}{Beygelzimer
  et~al\mbox{.}}{2006}]%
        {beygelzimer2006cover}
\bibfield{author}{\bibinfo{person}{Alina Beygelzimer}, \bibinfo{person}{Sham
  Kakade}, {and} \bibinfo{person}{John Langford}.}
  \bibinfo{year}{2006}\natexlab{}.
\newblock \showarticletitle{Cover trees for nearest neighbor}. In
  \bibinfo{booktitle}{{\em International Conference on Machine Learning
  (ICML)}}.
\newblock


\bibitem[\protect\citeauthoryear{Blundell, Teh, and Heller}{Blundell
  et~al\mbox{.}}{2011}]%
        {blundell2011discovering}
\bibfield{author}{\bibinfo{person}{Charles Blundell}, \bibinfo{person}{Yee~Whye
  Teh}, {and} \bibinfo{person}{Katherine~A Heller}.}
  \bibinfo{year}{2011}\natexlab{}.
\newblock \showarticletitle{Discovering non-binary hierarchical structures with
  Bayesian rose trees}.
\newblock \bibinfo{journal}{{\em Mixture Estimation and Applications\/}}
  (\bibinfo{year}{2011}).
\newblock


\bibitem[\protect\citeauthoryear{Bottou and Bengio}{Bottou and Bengio}{1995}]%
        {bottou1995convergence}
\bibfield{author}{\bibinfo{person}{L{\'e}on Bottou} {and}
  \bibinfo{person}{Yoshua Bengio}.} \bibinfo{year}{1995}\natexlab{}.
\newblock \showarticletitle{Convergence Properties of the K-Means Algorithms}.
  In \bibinfo{booktitle}{{\em Advances in Neural Information Processing Systems
  (NIPS)}}.
\newblock


\bibitem[\protect\citeauthoryear{Brown, Desouza, Mercer, Pietra, and Lai}{Brown
  et~al\mbox{.}}{1992}]%
        {brown1992class}
\bibfield{author}{\bibinfo{person}{Peter~F Brown}, \bibinfo{person}{Peter~V
  Desouza}, \bibinfo{person}{Robert~L Mercer}, \bibinfo{person}{Vincent J~Della
  Pietra}, {and} \bibinfo{person}{Jenifer~C Lai}.}
  \bibinfo{year}{1992}\natexlab{}.
\newblock \showarticletitle{Class-based n-gram models of natural language}.
\newblock \bibinfo{journal}{{\em Computational linguistics\/}}
  (\bibinfo{year}{1992}).
\newblock


\bibitem[\protect\citeauthoryear{Charikar, Chekuri, Feder, and
  Motwani}{Charikar et~al\mbox{.}}{1997}]%
        {charikar1997incremental}
\bibfield{author}{\bibinfo{person}{Moses Charikar}, \bibinfo{person}{Chandra
  Chekuri}, \bibinfo{person}{Tom{\'a}s Feder}, {and} \bibinfo{person}{Rajeev
  Motwani}.} \bibinfo{year}{1997}\natexlab{}.
\newblock \showarticletitle{Incremental clustering and dynamic information
  retrieval}. In \bibinfo{booktitle}{{\em Symposium on Theory of Computing
  (STOC)}}.
\newblock


\bibitem[\protect\citeauthoryear{Choromanska and Langford}{Choromanska and
  Langford}{2015}]%
        {choromanska2015logarithmic}
\bibfield{author}{\bibinfo{person}{Anna~E Choromanska} {and}
  \bibinfo{person}{John Langford}.} \bibinfo{year}{2015}\natexlab{}.
\newblock \showarticletitle{Logarithmic time online multiclass prediction}. In
  \bibinfo{booktitle}{{\em Advances in Neural Information Processing Systems
  (NIPS)}}.
\newblock


\bibitem[\protect\citeauthoryear{Clauset, Newman, and Moore}{Clauset
  et~al\mbox{.}}{2004}]%
        {clauset2004finding}
\bibfield{author}{\bibinfo{person}{Aaron Clauset}, \bibinfo{person}{Mark~EJ
  Newman}, {and} \bibinfo{person}{Cristopher Moore}.}
  \bibinfo{year}{2004}\natexlab{}.
\newblock \showarticletitle{Finding community structure in very large
  networks}.
\newblock \bibinfo{journal}{{\em Physical review E\/}} (\bibinfo{year}{2004}).
\newblock


\bibitem[\protect\citeauthoryear{Daume~III, Karampatziakis, Langford, and
  Mineiro}{Daume~III et~al\mbox{.}}{2016}]%
        {daume2016logarithmic}
\bibfield{author}{\bibinfo{person}{Hal Daume~III}, \bibinfo{person}{Nikos
  Karampatziakis}, \bibinfo{person}{John Langford}, {and} \bibinfo{person}{Paul
  Mineiro}.} \bibinfo{year}{2016}\natexlab{}.
\newblock \showarticletitle{Logarithmic Time One-Against-Some}.
\newblock \bibinfo{journal}{{\em arXiv:1606.04988\/}} (\bibinfo{year}{2016}).
\newblock


\bibitem[\protect\citeauthoryear{Dehak, Kenny, Dehak, Dumouchel, and
  Ouellet}{Dehak et~al\mbox{.}}{2011}]%
        {dehak2011front}
\bibfield{author}{\bibinfo{person}{Najim Dehak}, \bibinfo{person}{Patrick~J
  Kenny}, \bibinfo{person}{R{\'e}da Dehak}, \bibinfo{person}{Pierre Dumouchel},
  {and} \bibinfo{person}{Pierre Ouellet}.} \bibinfo{year}{2011}\natexlab{}.
\newblock \showarticletitle{Front-end factor analysis for speaker
  verification}.
\newblock \bibinfo{journal}{{\em IEEE Transactions on Audio, Speech, and
  Language Processing\/}} (\bibinfo{year}{2011}).
\newblock


\bibitem[\protect\citeauthoryear{Deng, Dong, Socher, Li, Li, and Fei-Fei}{Deng
  et~al\mbox{.}}{2009}]%
        {deng2009imagenet}
\bibfield{author}{\bibinfo{person}{Jia Deng}, \bibinfo{person}{Wei Dong},
  \bibinfo{person}{Richard Socher}, \bibinfo{person}{Li-Jia Li},
  \bibinfo{person}{Kai Li}, {and} \bibinfo{person}{Li Fei-Fei}.}
  \bibinfo{year}{2009}\natexlab{}.
\newblock \showarticletitle{Imagenet: A large-scale hierarchical image
  database}. In \bibinfo{booktitle}{{\em Computer Vision and Pattern
  Recognition (CVPR)}}.
\newblock


\bibitem[\protect\citeauthoryear{Eisen, Spellman, Brown, and Botstein}{Eisen
  et~al\mbox{.}}{1998}]%
        {eisen1998cluster}
\bibfield{author}{\bibinfo{person}{Michael~B Eisen}, \bibinfo{person}{Paul~T
  Spellman}, \bibinfo{person}{Patrick~O Brown}, {and} \bibinfo{person}{David
  Botstein}.} \bibinfo{year}{1998}\natexlab{}.
\newblock \showarticletitle{Cluster analysis and display of genome-wide
  expression patterns}.
\newblock \bibinfo{journal}{{\em Proceedings of the National Academy of
  Sciences (PNAS)\/}} (\bibinfo{year}{1998}).
\newblock


\bibitem[\protect\citeauthoryear{Ester, Kriegel, Sander, Xu,
  et~al\mbox{.}}{Ester et~al\mbox{.}}{1996}]%
        {ester1996density}
\bibfield{author}{\bibinfo{person}{Martin Ester}, \bibinfo{person}{Hans-Peter
  Kriegel}, \bibinfo{person}{J{\"o}rg Sander}, \bibinfo{person}{Xiaowei Xu},
  {and} \bibinfo{person}{others}.} \bibinfo{year}{1996}\natexlab{}.
\newblock \showarticletitle{A density-based algorithm for discovering clusters
  in large spatial databases with noise.}. In \bibinfo{booktitle}{{\em
  International Conference on Knowledge Discovery and Data Mining (KDD)}}.
\newblock


\bibitem[\protect\citeauthoryear{Fichtenberger, Gill{\'e}, Schmidt,
  Schwiegelshohn, and Sohler}{Fichtenberger et~al\mbox{.}}{2013}]%
        {fichtenberger2013bico}
\bibfield{author}{\bibinfo{person}{Hendrik Fichtenberger},
  \bibinfo{person}{Marc Gill{\'e}}, \bibinfo{person}{Melanie Schmidt},
  \bibinfo{person}{Chris Schwiegelshohn}, {and} \bibinfo{person}{Christian
  Sohler}.} \bibinfo{year}{2013}\natexlab{}.
\newblock \showarticletitle{BICO: BIRCH meets coresets for k-means clustering}.
  In \bibinfo{booktitle}{{\em European Symposium on Algorithms}}.
\newblock


\bibitem[\protect\citeauthoryear{Geusebroek, Burghouts, and
  Smeulders}{Geusebroek et~al\mbox{.}}{2005}]%
        {geusebroek2005amsterdam}
\bibfield{author}{\bibinfo{person}{Jan-Mark Geusebroek},
  \bibinfo{person}{Gertjan~J Burghouts}, {and} \bibinfo{person}{Arnold~WM
  Smeulders}.} \bibinfo{year}{2005}\natexlab{}.
\newblock \showarticletitle{The Amsterdam library of object images}.
\newblock \bibinfo{journal}{{\em International Journal of Computer Vision
  (IJCV)\/}} (\bibinfo{year}{2005}).
\newblock


\bibitem[\protect\citeauthoryear{Gorban, K{\'e}gl, Wunsch, Zinovyev,
  et~al\mbox{.}}{Gorban et~al\mbox{.}}{2008}]%
        {gorban2008principal}
\bibfield{author}{\bibinfo{person}{Alexander~N Gorban},
  \bibinfo{person}{Bal{\'a}zs K{\'e}gl}, \bibinfo{person}{Donald~C Wunsch},
  \bibinfo{person}{Andrei~Y Zinovyev}, {and} \bibinfo{person}{others}.}
  \bibinfo{year}{2008}\natexlab{}.
\newblock \bibinfo{booktitle}{{\em Principal manifolds for data visualization
  and dimension reduction}}.
\newblock \bibinfo{publisher}{Springer}.
\newblock


\bibitem[\protect\citeauthoryear{Heller and Ghahramani}{Heller and
  Ghahramani}{2005}]%
        {heller2005bayesian}
\bibfield{author}{\bibinfo{person}{Katherine~A Heller} {and}
  \bibinfo{person}{Zoubin Ghahramani}.} \bibinfo{year}{2005}\natexlab{}.
\newblock \showarticletitle{Bayesian hierarchical clustering}. In
  \bibinfo{booktitle}{{\em International Conference on Machine Learning
  (ICML)}}.
\newblock


\bibitem[\protect\citeauthoryear{Hoffman, Blei, Wang, and Paisley}{Hoffman
  et~al\mbox{.}}{2013}]%
        {hoffman2013stochastic}
\bibfield{author}{\bibinfo{person}{Matthew~D Hoffman}, \bibinfo{person}{David~M
  Blei}, \bibinfo{person}{Chong Wang}, {and} \bibinfo{person}{John~William
  Paisley}.} \bibinfo{year}{2013}\natexlab{}.
\newblock \showarticletitle{Stochastic variational inference.}
\newblock \bibinfo{journal}{{\em Journal of Machine Learning Research
  (JMLR)\/}} (\bibinfo{year}{2013}).
\newblock


\bibitem[\protect\citeauthoryear{Krishnamurthy, Balakrishnan, Xu, and
  Singh}{Krishnamurthy et~al\mbox{.}}{2012}]%
        {krishnamurthy2012efficient}
\bibfield{author}{\bibinfo{person}{Akshay Krishnamurthy},
  \bibinfo{person}{Sivaraman Balakrishnan}, \bibinfo{person}{Min Xu}, {and}
  \bibinfo{person}{Aarti Singh}.} \bibinfo{year}{2012}\natexlab{}.
\newblock \showarticletitle{Efficient active algorithms for hierarchical
  clustering}. In \bibinfo{booktitle}{{\em International Conference on Machine
  Learning (ICML)}}.
\newblock


\bibitem[\protect\citeauthoryear{Liberty, Sriharsha, and Sviridenko}{Liberty
  et~al\mbox{.}}{2016}]%
        {liberty2016algorithm}
\bibfield{author}{\bibinfo{person}{Edo Liberty}, \bibinfo{person}{Ram
  Sriharsha}, {and} \bibinfo{person}{Maxim Sviridenko}.}
  \bibinfo{year}{2016}\natexlab{}.
\newblock \showarticletitle{An algorithm for online k-means clustering}. In
  \bibinfo{booktitle}{{\em Workshop on Algorithm Engineering and Experiments}}.
\newblock


\bibitem[\protect\citeauthoryear{Lichman}{Lichman}{2013}]%
        {Lichman:2013}
\bibfield{author}{\bibinfo{person}{M. Lichman}.}
  \bibinfo{year}{2013}\natexlab{}.
\newblock \bibinfo{title}{{UCI} Machine Learning Repository}.
\newblock   (\bibinfo{year}{2013}).
\newblock
\showURL{%
\url{http://archive.ics.uci.edu/ml}}


\bibitem[\protect\citeauthoryear{Manning, Raghavan, Sch{\"u}tze,
  et~al\mbox{.}}{Manning et~al\mbox{.}}{2008}]%
        {manning2008introduction}
\bibfield{author}{\bibinfo{person}{Christopher~D Manning},
  \bibinfo{person}{Prabhakar Raghavan}, \bibinfo{person}{Hinrich Sch{\"u}tze},
  {and} \bibinfo{person}{others}.} \bibinfo{year}{2008}\natexlab{}.
\newblock \bibinfo{booktitle}{{\em Introduction to information retrieval}}.
  Vol.~\bibinfo{volume}{1}.
\newblock \bibinfo{publisher}{Cambridge university press Cambridge}.
\newblock


\bibitem[\protect\citeauthoryear{O'callaghan, Mishra, Meyerson, Guha, and
  Motwani}{O'callaghan et~al\mbox{.}}{2002}]%
        {o2002streaming}
\bibfield{author}{\bibinfo{person}{Liadan O'callaghan}, \bibinfo{person}{Nina
  Mishra}, \bibinfo{person}{Adam Meyerson}, \bibinfo{person}{Sudipto Guha},
  {and} \bibinfo{person}{Rajeev Motwani}.} \bibinfo{year}{2002}\natexlab{}.
\newblock \showarticletitle{Streaming-data algorithms for high-quality
  clustering}. In \bibinfo{booktitle}{{\em International Conference on Data
  Engineering}}.
\newblock


\bibitem[\protect\citeauthoryear{Omohundro}{Omohundro}{1989}]%
        {omohundro1989five}
\bibfield{author}{\bibinfo{person}{Stephen~M Omohundro}.}
  \bibinfo{year}{1989}\natexlab{}.
\newblock \bibinfo{booktitle}{{\em Five balltree construction algorithms}}.
\newblock \bibinfo{publisher}{International Computer Science Institute
  Berkeley}.
\newblock


\bibitem[\protect\citeauthoryear{Ostrovsky, Rabani, Schulman, and
  Swamy}{Ostrovsky et~al\mbox{.}}{2006}]%
        {ostrovsky2006effectiveness}
\bibfield{author}{\bibinfo{person}{Rafail Ostrovsky}, \bibinfo{person}{Yuval
  Rabani}, \bibinfo{person}{Leonard~J Schulman}, {and}
  \bibinfo{person}{Chaitanya Swamy}.} \bibinfo{year}{2006}\natexlab{}.
\newblock \showarticletitle{The effectiveness of Lloyd-type methods for the
  k-means problem}. In \bibinfo{booktitle}{{\em Foundations of Computer Science
  (FOCS)}}.
\newblock


\bibitem[\protect\citeauthoryear{Pedregosa, Varoquaux, Gramfort, Michel,
  Thirion, Grisel, Blondel, Prettenhofer, Weiss, Dubourg, Vanderplas, Passos,
  Cournapeau, Brucher, Perrot, and Duchesnay}{Pedregosa et~al\mbox{.}}{2011}]%
        {scikit-learn}
\bibfield{author}{\bibinfo{person}{F. Pedregosa}, \bibinfo{person}{G.
  Varoquaux}, \bibinfo{person}{A. Gramfort}, \bibinfo{person}{V. Michel},
  \bibinfo{person}{B. Thirion}, \bibinfo{person}{O. Grisel},
  \bibinfo{person}{M. Blondel}, \bibinfo{person}{P. Prettenhofer},
  \bibinfo{person}{R. Weiss}, \bibinfo{person}{V. Dubourg}, \bibinfo{person}{J.
  Vanderplas}, \bibinfo{person}{A. Passos}, \bibinfo{person}{D. Cournapeau},
  \bibinfo{person}{M. Brucher}, \bibinfo{person}{M. Perrot}, {and}
  \bibinfo{person}{E. Duchesnay}.} \bibinfo{year}{2011}\natexlab{}.
\newblock \showarticletitle{Scikit-learn: Machine Learning in {P}ython}.
\newblock \bibinfo{journal}{{\em Journal of Machine Learning Research\/}}
  \bibinfo{volume}{12} (\bibinfo{year}{2011}), \bibinfo{pages}{2825--2830}.
\newblock


\bibitem[\protect\citeauthoryear{Pruitt, Tatusova, Brown, and Maglott}{Pruitt
  et~al\mbox{.}}{2012}]%
        {ncbi}
\bibfield{author}{\bibinfo{person}{Kim~D Pruitt}, \bibinfo{person}{Tatiana
  Tatusova}, \bibinfo{person}{Garth~R Brown}, {and} \bibinfo{person}{Donna~R
  Maglott}.} \bibinfo{year}{2012}\natexlab{}.
\newblock \showarticletitle{NCBI Reference Sequences (RefSeq): current status,
  new features and genome annotation policy}.
\newblock \bibinfo{journal}{{\em Nucleic acids research\/}}
  (\bibinfo{year}{2012}).
\newblock


\bibitem[\protect\citeauthoryear{Russakovsky, Deng, Su, Krause, Satheesh, Ma,
  Huang, Karpathy, Khosla, Bernstein, Berg, and Fei-Fei}{Russakovsky
  et~al\mbox{.}}{2015}]%
        {ILSVRC15}
\bibfield{author}{\bibinfo{person}{Olga Russakovsky}, \bibinfo{person}{Jia
  Deng}, \bibinfo{person}{Hao Su}, \bibinfo{person}{Jonathan Krause},
  \bibinfo{person}{Sanjeev Satheesh}, \bibinfo{person}{Sean Ma},
  \bibinfo{person}{Zhiheng Huang}, \bibinfo{person}{Andrej Karpathy},
  \bibinfo{person}{Aditya Khosla}, \bibinfo{person}{Michael Bernstein},
  \bibinfo{person}{Alexander~C. Berg}, {and} \bibinfo{person}{Li Fei-Fei}.}
  \bibinfo{year}{2015}\natexlab{}.
\newblock \showarticletitle{{ImageNet Large Scale Visual Recognition
  Challenge}}.
\newblock \bibinfo{journal}{{\em International Journal of Computer Vision
  (IJCV)\/}} (\bibinfo{year}{2015}).
\newblock


\bibitem[\protect\citeauthoryear{Sculley}{Sculley}{2010}]%
        {sculley2010web}
\bibfield{author}{\bibinfo{person}{David Sculley}.}
  \bibinfo{year}{2010}\natexlab{}.
\newblock \showarticletitle{Web-scale k-means clustering}. In
  \bibinfo{booktitle}{{\em International World Wide Web Conference (WWW)}}.
\newblock


\bibitem[\protect\citeauthoryear{Sedgewick}{Sedgewick}{1988}]%
        {sedgewick1988algorithms}
\bibfield{author}{\bibinfo{person}{Robert Sedgewick}.}
  \bibinfo{year}{1988}\natexlab{}.
\newblock \bibinfo{booktitle}{{\em Algorithms}}.
\newblock \bibinfo{publisher}{Pearson Education India}.
\newblock


\bibitem[\protect\citeauthoryear{Szegedy, Vanhoucke, Ioffe, Shlens, and
  Wojna}{Szegedy et~al\mbox{.}}{2016}]%
        {szegedy2016rethinking}
\bibfield{author}{\bibinfo{person}{Christian Szegedy}, \bibinfo{person}{Vincent
  Vanhoucke}, \bibinfo{person}{Sergey Ioffe}, \bibinfo{person}{Jon Shlens},
  {and} \bibinfo{person}{Zbigniew Wojna}.} \bibinfo{year}{2016}\natexlab{}.
\newblock \showarticletitle{Rethinking the inception architecture for computer
  vision}. In \bibinfo{booktitle}{{\em Computer Vision and Pattern Recognition
  (CVPR)}}.
\newblock


\bibitem[\protect\citeauthoryear{Widyantoro, Ioerger, and Yen}{Widyantoro
  et~al\mbox{.}}{2002}]%
        {widyantoro2002incremental}
\bibfield{author}{\bibinfo{person}{Dwi~H Widyantoro}, \bibinfo{person}{Thomas~R
  Ioerger}, {and} \bibinfo{person}{John Yen}.} \bibinfo{year}{2002}\natexlab{}.
\newblock \showarticletitle{An incremental approach to building a cluster
  hierarchy}. In \bibinfo{booktitle}{{\em International Conference on Data
  Mining (ICDM)}}.
\newblock


\bibitem[\protect\citeauthoryear{Zhang, Ramakrishnan, and Livny}{Zhang
  et~al\mbox{.}}{1996}]%
        {zhang1996birch}
\bibfield{author}{\bibinfo{person}{Tian Zhang}, \bibinfo{person}{Raghu
  Ramakrishnan}, {and} \bibinfo{person}{Miron Livny}.}
  \bibinfo{year}{1996}\natexlab{}.
\newblock \showarticletitle{BIRCH: an efficient data clustering method for very
  large databases}. In \bibinfo{booktitle}{{\em ACM Sigmod Record}}.
\newblock


\end{thebibliography}

\end{document}